
\documentclass{article}

\usepackage{microtype}
\usepackage{graphicx}
\usepackage{subcaption}
\usepackage{booktabs} 
\usepackage{enumitem}
\usepackage{commath}
\usepackage{amssymb}
\usepackage{graphicx} 
\usepackage[skip=2pt]{caption}
\usepackage{xspace}
\usepackage{xcolor}
\usepackage{thmtools}
\usepackage{amsthm}

\usepackage{hyperref}



\usepackage[accepted]{icml2021}

\icmltitlerunning{How to Not Measure Disentanglement}

\newcommand{\OurMeasure}{3CharM}
\providecommand{\argmax}{\operatornamewithlimits{arg\,max}}
\newtheorem{property}{Property}
\newtheorem{fact}{Fact}
\newtheorem{characteristic}{Characteristic}
\newtheorem{definition}{Definition}

\begin{document}

\twocolumn[
\icmltitle{How to Not Measure Disentanglement}



\icmlsetsymbol{equal}{*}

\begin{icmlauthorlist}
\icmlauthor{Anna Sepliarskaia}{tuWien}
\icmlauthor{Julia Kiseleva}{mrs}
\icmlauthor{Maarten de Rijke}{uva}
\end{icmlauthorlist}

\icmlaffiliation{tuWien}{Vienna University of Technology, Austria}
\icmlaffiliation{uva}{University of Amsterdam, Amsterdam, The Netherlands}
\icmlaffiliation{mrs}{Microsoft Research AI, Seattle, WA, USA}

\icmlcorrespondingauthor{Anna Sepliarskaia}{a.sepliarskaia@uva.nl}
\icmlcorrespondingauthor{Julia Kiseleva}{Julia.kiseleva@microsoft.com}
\icmlcorrespondingauthor{Maarten de Rijke}{derijke@uva.nl}

\icmlkeywords{representation learning, disentangled representation, evaluation}

\vskip 0.3in
]



\printAffiliationsAndNotice{\icmlEqualContribution} 

\begin{abstract}
To evaluate disentangled representations several metrics have been proposed. 
However, theoretical guarantees for conventional metrics of disentanglement are missing. 
Moreover, conventional metrics do not have a consistent correlation with the outcomes of qualitative studies. 
In this paper we analyze metrics of disentanglement and their properties. 
We conclude that existing metrics of disentanglement were created to reflect different characteristics of disentanglement and do not satisfy two basic desirable properties: (1) assign a high score to representations that are disentangled according to the definition; and (2) assign a low score to representations that are entangled according to the definition. 
In addition, we propose a new metric of disentanglement and prove that it satisfies both of the properties.
\end{abstract}

\section{Introduction}
Algorithms for learning representations are crucial for a variety of machine learning tasks, including image classification~\citep{vincent2008extracting, hinton2006reducing} and image generation~\citep{goodfellow2014generative, makhzani2015adversarial}.
One type of representation learning algorithm is designed to create a disentangled representation.
While there is no standardized definition of a disentangled representation, the key intuition is that a disentangled representation should capture and separate the generative factors ~\citep{bengio2013representation, higgins2018towards}.
In this paper, we assume that the \emph{generative factors} of the dataset are interpretable factors that describe every sample from the dataset.

Consider, for example, a dataset containing rectangles of different shapes.
There are two generative factors for this dataset: the length and width of the rectangles (see Fig.~\ref{ex}).
In the disentangled latent representation of this dataset we can choose two latent factors.
One of these factors is an invertible function of the length of the rectangles.
Another is an invertible function of the width of the rectangles.
\begin{figure}

\centering
\includegraphics[clip,trim=0mm 200mm 0mm 200mm,width=0.5\textwidth]{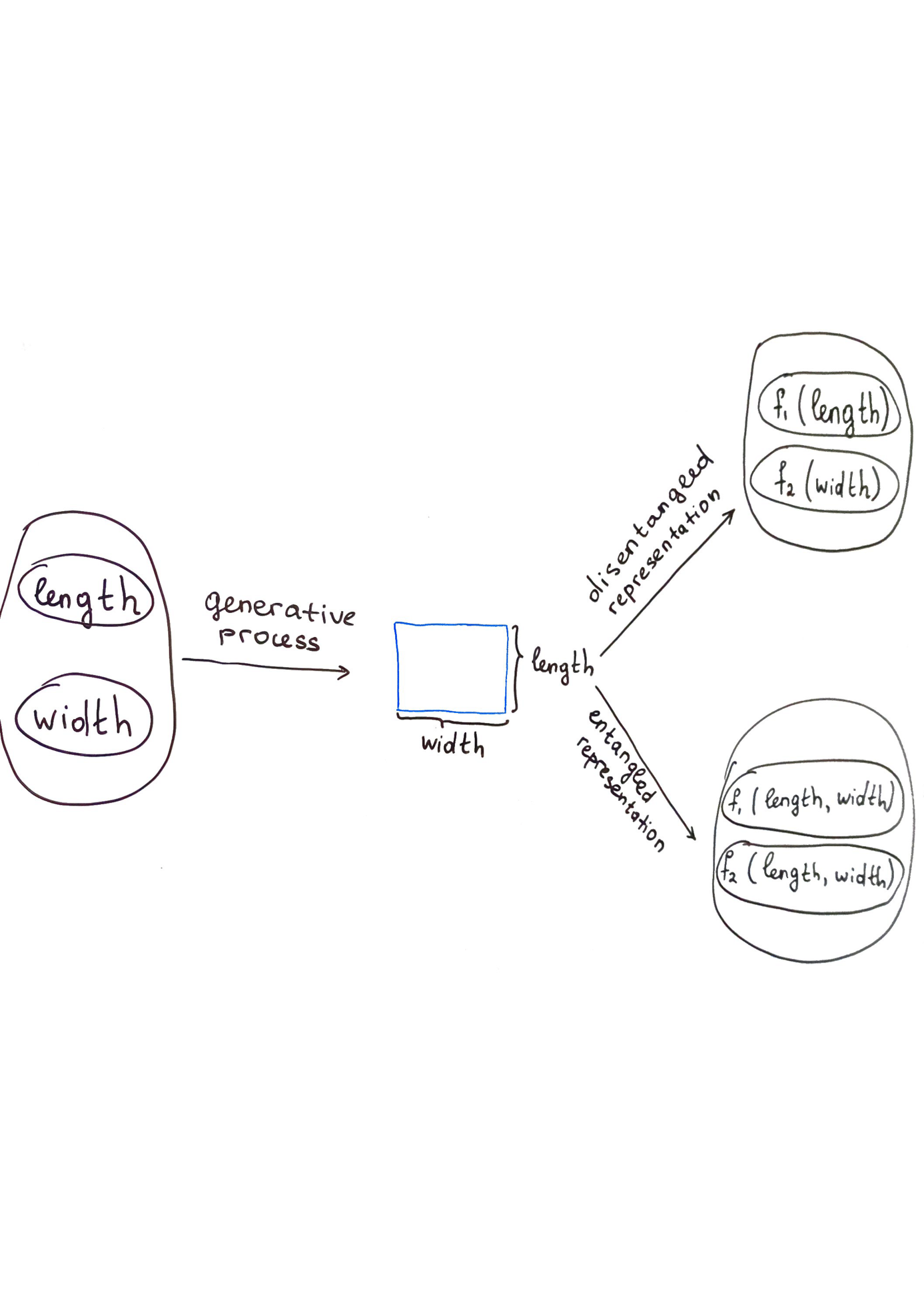}
\caption{Example of a dataset containing rectangles}
\label{ex}
\end{figure}

Learning a disentangled representation is an important step towards better representation learning because a disentangled representation contains information about elements in a dataset in an interpretable and compact structure~\citep{bengio2013representation, higgins2018towards}.
Therefore, the development of an algorithm that learns disentangled representations has become an active area of research~\cite{detlefsen2019explicit, dezfouli2019disentangled,lorenz2019unsupervised}.
The conventional way to measure the quality of these algorithms is to provide the results according to one of following metrics~\citep{locatello2018challenging}: \emph{BetaVAE}~\citep{higgins2016beta}, FactorVAE~\citep{kim2018disentangling}, DCI~\citep{eastwood2018framework}, SAP score~\citep{kumar2017variational}, and MIG~\citep{chen2018isolating}.
However, it was shown that the outcomes of these metrics are inconsistent with the outcomes of a qualitative study of the disentanglement of learned representations~\citep{abdi2019preliminary}; moreover, it is not clear which metric should be preferred.

In this paper, we theoretically analyzed the conventional metrics.
The outcome of our analysis is understanding of the reasons why conventional metrics not always correlate with each other: different metrics were designed to reflect different characteristics of disentanglement.
As a consequence, the metric for evaluating the algorithm should be determined by which of the characteristic of disentanglement the method was designed to reflect.
Moreover, we understood why outcomes of conventional metrics are inconsistent with definition of disentanglement.
We checked if the metrics satisfy basic desirable properties: (1) assign a high score to representations that are disentangled according to the definition; and (2) assign a low score to representations that are entangled according to the definition. 
We showed that majority of the metrics do not satisfy them.

To conclude, our key contributions in this paper are:
\begin{itemize}[leftmargin=*,nosep]
\item We review existing metrics of disentanglement and discuss their fundamental properties.
\item We propose a new metric of disentanglement with theoretical guarantees,
and establish its fundamental properties.
\end{itemize}

\section{Background and Notation}
\subsection{Representation learning}
There are different types of representation learning algorithm, but usually, an algorithm for learning disentangled representations consists of two parts: an encoder and a decoder.
An \emph{encoder} is a function:
\begin{equation}
f_e : \mathbb{R}^d \rightarrow \mathbb{R}^N,\ \bold{c}=f_e(\bold{x}), 
\end{equation}
where $\bold{c}$ is a latent representation of the data sample $\bold{x}$.

A \emph{decoder} is a function: 
\begin{equation}
f_d : \mathbb{R}^N \rightarrow \mathbb{R}^d,\ f_d(f_e(\bold{x}))\sim \bold{x}, 
\end{equation}
where $f_d(f_e(\bold{x}))$ should be close to $\bold{x}$. 

\subsection{Ground truth generative factors}
We assume that a dataset was generated by a generative process from \emph{generative factors}.
We define the \emph{generative factors} of a dataset in the following way:
\begin{definition}
\label{gen_fac_defin_first}
\rm
The \emph{generative factors} of a dataset are interpretable factors that describe the difference between any two samples from $X$.
\end{definition}

Although there are different ways of choosing the generative factors, in this paper we assume that the meaning of the generative factors is given to us and fixed.
Moreover, we are interested in whether the representation is disentangled \emph{corresponding} to generative factors given to us.
For example, the generative factors for a dataset containing rectangles of different shapes, as shown in Fig. ~ \ref{ex}, are the length and width of the rectangles.
Although, the length of the diagonal and the width of the rectangle are another set of generative factors of the rectangles, we will assume that we are interested in a disentangled representation \emph{corresponding} to the length and width of the rectangles.

We further define the \emph{ground truth} generative factors:
\begin{definition}
\label{gt_gen_fac_defin_first}
\rm
The \emph{ground truth} generative factors are the values of generative factors for a given collection.
\end{definition}
This means that we assume that for each sample $\bold{x}\in X$ of the dataset, the values of the generative factors $\mathbf{z}\in\mathbb{R}^{K}$ are known during the evaluation.

\section{Metrics of Disentanglement of Representations}
The main purpose of this paper  is to analyze conventional metrics of disentangled representations, which is done in this section.
Though there is no universally accepted definition of disentanglement, most metrics are based on the definition proposed in \citep{bengio2013representation} and reflect characteristics of a disentangled representation in accordance with this definition.
However, conventional metrics were designed to reflect \textbf{different} characteristics of disentangled representations: conventional metrics can be divided into two groups, depending on which characteristic they reflect.
In this paper, we analyze whether conventional metrics satisfy the following fundamental properties:
\begin{property}
\label{property_1}
A metric gives a high score to all representations that satisfy the characteristic that the metric reflects.
\end{property}
\begin{property}
\label{property_2}
A metric gives a low score for all representations that do not satisfy the characteristic that the metric reflects.
\end{property}
\subsection{BetaVAE, FactorVAE and DCI}

\begin{figure}

\begin{subfigure}[t]{0.48\linewidth}
\includegraphics[width=\linewidth]{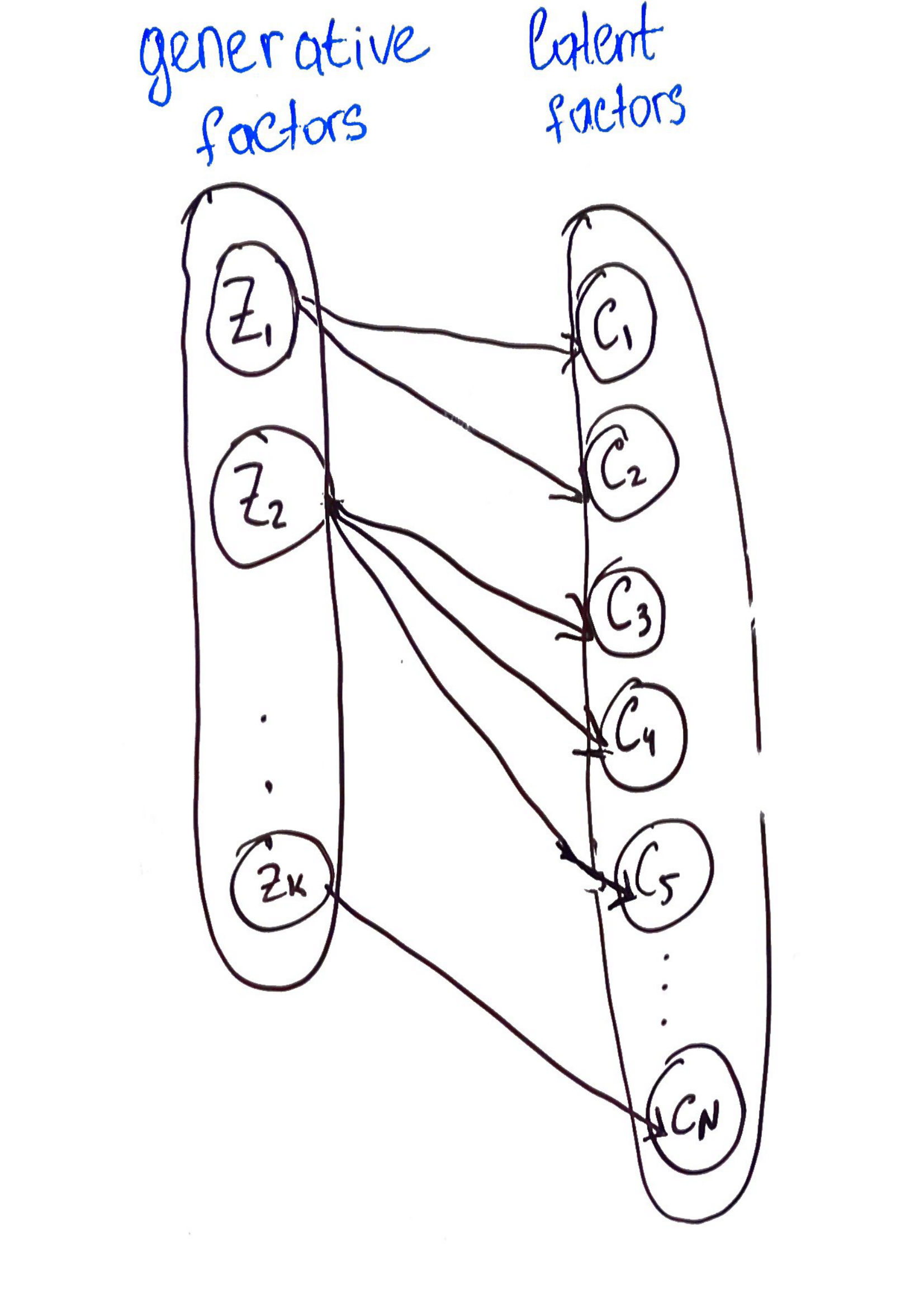}
\caption{First characteristic of disentanglement}
\label{First_caharact}
\end{subfigure}%
\hfill
\begin{subfigure}[t]{0.48\linewidth}
\includegraphics[width=\linewidth]{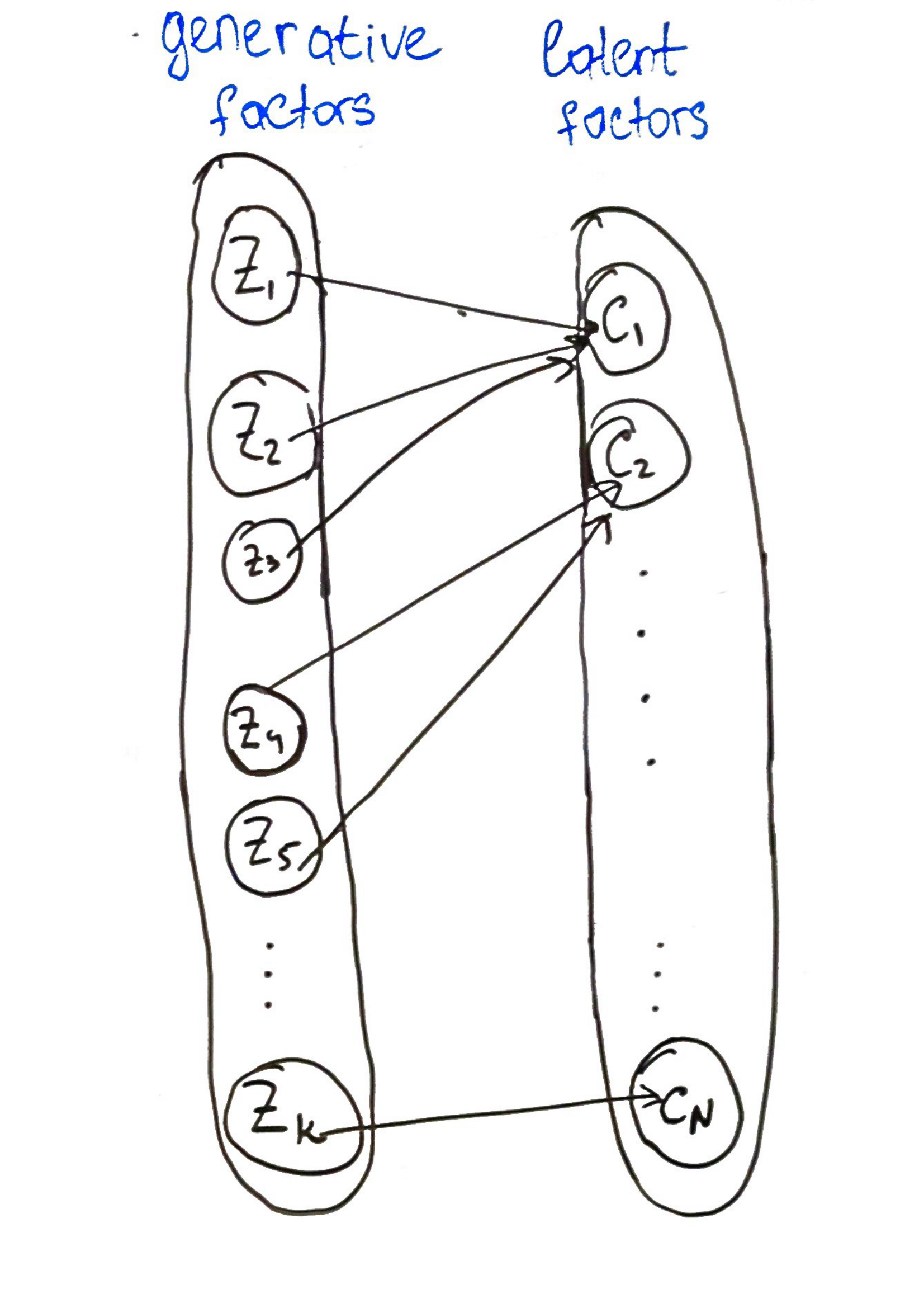}
\caption{Second characteristic of disentanglement}
\label{Second_caharact}
\end{subfigure}%
\caption{Different characteristics of disentanglement.}
\end{figure}

In this subsection, we analyze metrics that reflect the following characteristic of disentangled representations.
\begin{characteristic}
\label{dis_defin_first}
\rm
In a \emph{disentangled representation} a change in one latent dimension corresponds to a change in one generative factor while being relatively invariant to changes in other generative factors (see Fig.\ref{First_caharact}).
\end{characteristic}

\subsubsection{Definition of BetaVAE}
The algorithm that calculates \emph{BetaVAE}~\citep{higgins2016beta} consists of the following steps:
\begin{enumerate}[leftmargin=*,nosep]
\label{training_BetaVAE}
\item Choose a generative factor $z_r$.
\item Generate a batch of pairs of vectors for which the value of $z_r$ within the pair is equal, while other generative factors are chosen randomly:
\begin{equation*}
\begin{split}
(\bold{p}_1 = \langle z_{1,1},  \ldots, z_{1,K} \rangle, \bold{p}_2 = \langle z_{2,1}, \ldots, z_{2,K} \rangle),\\ z_{1,r} = z_{2,r}
\end{split}
\end{equation*}
\item Calculate the latent code of the generated pairs: $(\bold{c}_1 = f_e(g(\bold{p}_1)), \bold{c}_2 = f_e(g(\bold{p}_2)))$
\item Calculate the absolute value of the pairwise differences of these representations: 
\begin{equation*}
\bold{e}  =  \langle\,\abs{c_{1,1} - c_{2,1}}, \ldots, \abs{c_{1,N} - c_{2,N}}\rangle
\end{equation*}
\item The mean of these differences across the examples in the batch gives one training point for the linear regressor that predicts which generative factor was fixed.
\item BetaVAE is the accuracy of the linear regressor.
\end{enumerate}

\subsubsection{Definition of FactorVAE}
The idea behind FactorVAE~\citep{kim2018disentangling} is very similar to BetaVAE.
The main difference between them concerns how a batch of examples is generated to obtain a variation of latent variables when one generative factor is fixed. 
Another difference is the classifier that predicts which generative factor was fixed using the variation of latent variables.
\emph{FactorVAE} can be calculated by performing the following steps:
\begin{enumerate}[leftmargin=*,nosep]
\item Choose a generative factor $z_r$.
\item Generate a batch of vectors for which the value of $z_r$ within the batch is fixed, while other generative factors are chosen randomly.
\item Calculate latent codes of vectors from one batch.
\item Normalize each dimension in the latent representation by its empirical standard deviation over the full data.
\item Take the empirical variance in each dimension of these normalized representations.
\item The index of the dimension with the lowest variance and the target index $r$ provides one training point for the classifier.
\item FactorVAE is the accuracy of the classifier.
\end{enumerate}

\subsubsection{DCI: Disentanglement, Completeness and Informativeness}
\citet{eastwood2018framework} propose to use a metric of disentangled representations, which we call DCI, that is calculated as follows:
\begin{enumerate}[leftmargin=*,nosep]
\item First, the \emph{informativeness} between $c_i$ and $z_j$ is calculated.
To determine the informativeness between $c_i$ and $z_j$,~\citet{eastwood2018framework} suggest training $K$ regressors.
Each regressor $f_j$ predicts $z_j$ given $\mathbf{c}$ ($\hat{z_j} = f_j(\mathbf{c})$) and can provide an importance score $P_{i,j}$ for each $c_i$.
The normalized importance score obtained by regressor $f_j$ for variable $c_i$ is used as the informativeness between $c_i$ and $z_j$:
\begin{equation*}
I_{i,j} = \frac{P_{i,j}}{\sum_{k=0}^{k=K}P_{i,k}}.
\end{equation*}
\item For each latent variable its score of disentanglement is calculated as follows: 
\begin{equation*}
H_K(I_i)= 1 + \sum_{k=1}^{K}I_{i,k}\log_KI_{i,k}.
\end{equation*}
\item The weighted sum of the obtained scores of disentanglement for the latent variables is DCI: 
\begin{equation}
\label{weighted_sum}
\text{DCI}(\bold{c}, \bold{z}) =\ \sum_i\left(\rho_{i}\cdot H_K(I_i)\right),
\end{equation}
where $\rho_{i} ={\sum_{j}P_{i,j}}/{\sum_{ij}P_{i,j}}$.
\end{enumerate}

\subsubsection{Analysis of whether metrics satisfy the property~\ref{property_1}}

\begin{figure}[t]

\centering
\includegraphics[width=0.3\linewidth]{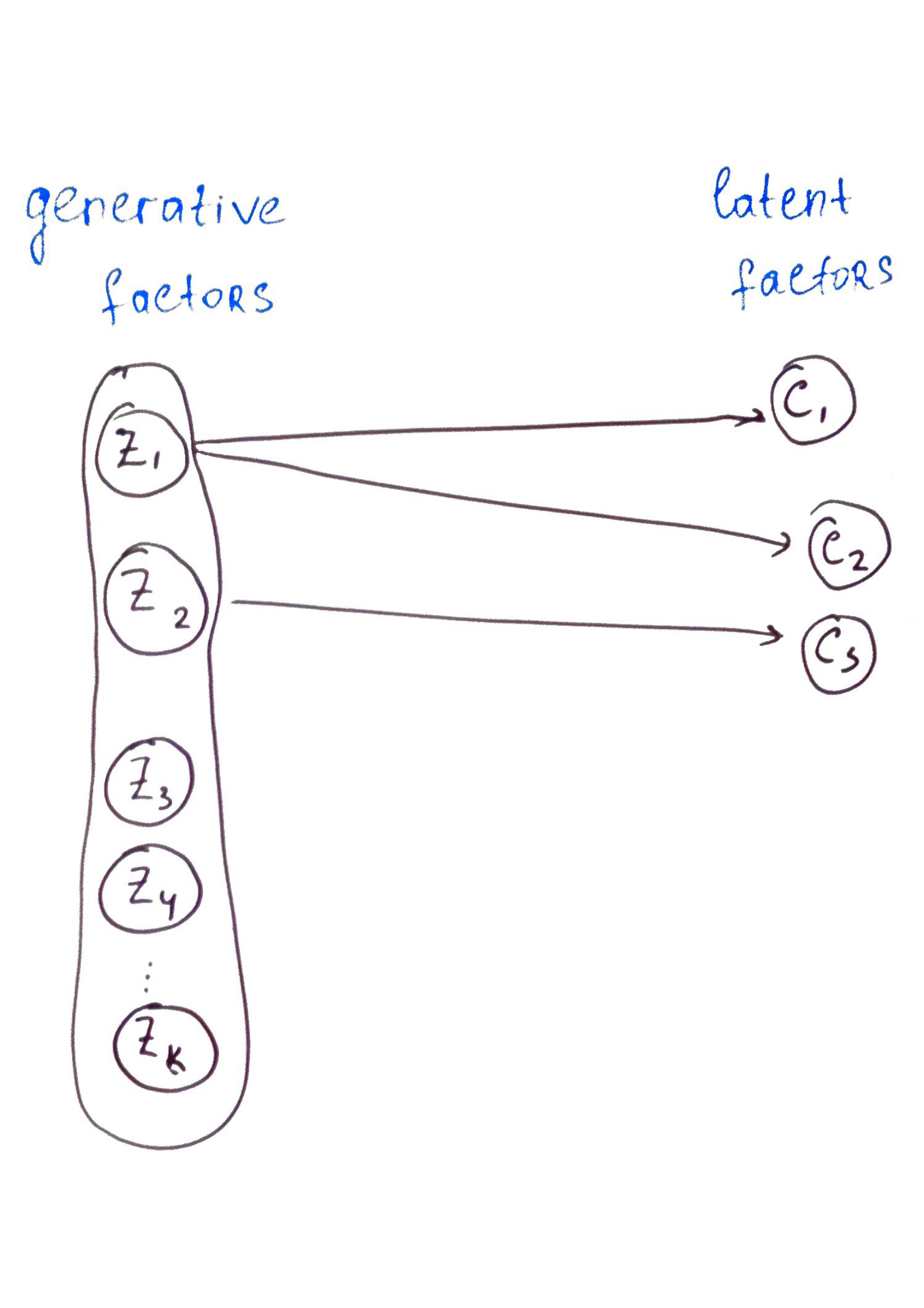}
\caption{Example of the representation, satisfying Characteristic~\ref{dis_defin_first}, but $BetaVAE = FactorVAE = frac{3}{K}$}
\label{ex1}
\end{figure}

\begin{fact}
\emph{BetaVAE} and \emph{FactorVAE} do not satisfy Property~\ref{property_1}.
\end{fact}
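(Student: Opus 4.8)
The statement is a universal claim (``a high score to \emph{all} representations that satisfy the characteristic''), so the plan is to refute it with a single counterexample: exhibit one representation that satisfies Characteristic~\ref{dis_defin_first} yet is assigned a low score by both metrics. The counterexample is the representation depicted in Fig.~\ref{ex1}. Concretely, over a dataset with $K$ ground truth generative factors, I would take a representation in which exactly two latent coordinates $c_1,c_2$ are invertible functions of $z_1$ and $z_2$ respectively, while every latent coordinate is left unchanged by the remaining factors $z_3,\dots,z_K$; that is, only two of the $K$ factors are captured, and each captured factor is tracked by a single dedicated coordinate.

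First I would verify that this representation satisfies Characteristic~\ref{dis_defin_first}. This is immediate from the construction: each latent coordinate changes in response to at most one generative factor and is invariant to all others, which is exactly what the characteristic demands. The crucial observation is that Characteristic~\ref{dis_defin_first} asks only for \emph{modularity} (one factor per coordinate) and says nothing about \emph{completeness} (every factor being captured), so a representation that silently ignores $z_3,\dots,z_K$ is still admissible.

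Next I would trace the BetaVAE procedure on this representation and compute the resulting classifier accuracy. When the fixed factor is $z_1$ (resp.\ $z_2$) the mean absolute-difference vector $\mathbf{e}$ has a distinctive near-zero entry in coordinate $1$ (resp.\ $2$) while the other informative coordinate is large, so the linear classifier identifies these two factors perfectly. When the fixed factor is any $z_r$ with $r\ge 3$, both informative coordinates vary and all $K-2$ of these cases produce the same expected difference vector; being statistically indistinguishable, the best any classifier can do is output a single label among them, yielding accuracy $1/(K-2)$ on that block. Summing the two regimes gives $\frac{2}{K}+\frac{K-2}{K}\cdot\frac{1}{K-2}=\frac{3}{K}$, matching the value in Fig.~\ref{ex1}. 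I would then run the analogous argument for FactorVAE: the collapsed (constant) coordinates carry no variance signal and are discarded, the arg-min-variance feature again pins down $z_1$ and $z_2$ but cannot separate $z_3,\dots,z_K$, and the accuracy collapses to the same order $\tfrac{3}{K}$. Since $\tfrac{3}{K}\to 0$ as $K$ grows and is nowhere near the score of $1$ attained by a fully disentangled complete representation, neither metric assigns a high score, contradicting Property~\ref{property_1}.

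The main obstacle is the classifier-accuracy bookkeeping rather than the construction. The delicate point is arguing rigorously that the $K-2$ uncaptured factors are genuinely indistinguishable under each metric's feature map---the difference vector $\mathbf{e}$ for BetaVAE and the index of minimum normalized variance for FactorVAE---so that no trained classifier can beat chance on that block; for FactorVAE I would additionally have to treat the normalization-by-standard-deviation edge case for constant coordinates (whose empirical standard deviation is zero) by appealing to the standard practice of pruning collapsed latents. I would also note that the construction scales: capturing $m$ of the $K$ factors yields accuracy $\tfrac{m+1}{K}$, which is low whenever $m\ll K$, so the failure of Property~\ref{property_1} is robust and not an artifact of the specific choice $m=2$.
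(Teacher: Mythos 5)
Your proposal is correct and follows the same route as the paper: the paper's (much terser) proof argues that a representation satisfying Characteristic~\ref{dis_defin_first} may leave some generative factors uncaptured, in which case BetaVAE and FactorVAE cannot distinguish those factors and the accuracy drops; its accompanying figure states the value $3/K$ that your accounting $\frac{2}{K}+\frac{K-2}{K}\cdot\frac{1}{K-2}=\frac{3}{K}$ reproduces exactly. Your version simply makes explicit the counterexample construction and the classifier-accuracy bookkeeping that the paper leaves implicit.
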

\begin{proof}
In a representation that satisfies Characteristic~\ref{dis_defin_first} there are could be several generative factors that are not captured by any latent factors. 
In this case BetaVAE and FactorVAE can not distinguish these generative factors.
\end{proof}

\begin{fact}
DCI does not satisfy Property~\ref{property_1}.
\end{fact}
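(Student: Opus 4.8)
The plan is to exhibit a single representation that satisfies Characteristic~\ref{dis_defin_first} yet receives a DCI value strictly below its maximum, so that Property~\ref{property_1} fails. First I would record the only structural fact about DCI that the argument needs: since $\log_K t \le 0$ for $t\in[0,1]$, each per-latent score $H_K(I_i)=1+\sum_k I_{i,k}\log_K I_{i,k}$ satisfies $H_K(I_i)\le 1$, with equality exactly when the normalized informativeness row $I_i$ is concentrated on a single factor (one-hot). Because $\mathrm{DCI}(\mathbf{c},\mathbf{z})=\sum_i \rho_i H_K(I_i)$ with weights $\rho_i\ge 0$ summing to one, DCI is a convex combination of the $H_K(I_i)$ and hence is maximal only when every latent that carries positive weight has a one-hot informativeness row.

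Next I would construct the counterexample in direct analogy with the BetaVAE/FactorVAE case. Take generative factors $z_1,\dots,z_K$ with $K\ge 2$ and a latent code $\mathbf{c}$ whose coordinates are invertible functions of $z_1,\dots,z_m$ and are invariant to every other factor, leaving at least one factor (say $z_K$) \emph{uncaptured}. This representation satisfies Characteristic~\ref{dis_defin_first}: changing any latent coordinate moves exactly one generative factor while leaving the rest fixed, and the characteristic places no requirement on factors that no latent encodes. The point is then to show that the uncaptured factor pollutes the informativeness matrix: the regressor that DCI trains to predict $z_K$ from $\mathbf{c}$ has no informative input, yet it must still base its prediction on the available latents, so its normalized importance scores $P_{i,K}$ are spread over those latents rather than vanishing. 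Consequently $I_{i,K}>0$ for each weighted latent $c_i$, its row ceases to be one-hot, $H_K(I_i)<1$, and therefore $\mathrm{DCI}(\mathbf{c},\mathbf{z})<1$. To show the deficiency is not merely marginal, I would let $z_2,\dots,z_K$ all be uncaptured by a single latent $c_1$ encoding $z_1$: every regressor then assigns essentially all of its importance to the only available coordinate, giving $I_{1,k}\approx 1/K$ and $H_K(I_1)\approx 0$, so DCI collapses toward its minimum while Characteristic~\ref{dis_defin_first} still holds.

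The main obstacle is that this conclusion depends on how the importance scores $P_{i,j}$ behave for a regressor aimed at an uncaptured factor, and a priori one could imagine an estimator that returns all-zero importances there, leaving $I_i$ one-hot. I would close this gap by fixing the importance estimator used in the original construction of \citet{eastwood2018framework}: normalized random-forest feature importances sum to one by construction, so the total importance of each regressor is positive and is necessarily distributed across the available latents, which guarantees the spurious entries $P_{i,K}>0$ that the argument requires. I would add the remark that the same failure occurs for any importance measure assigning positive total weight to a nontrivial regression, so the conclusion is robust to the particular choice made when instantiating DCI.
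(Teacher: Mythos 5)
Your argument is correct in outline but takes a genuinely different route from the paper's. The paper does not touch the regressors at all: it directly specifies an informativeness matrix with $11$ latents and $11$ factors, $I_{i,i}=0.8$ and $I_{i,k}=0.02$ for $k\neq i$, i.e.\ every latent primarily captures its own factor with a small \emph{uniformly spread} leakage. Characteristic~\ref{dis_defin_first} clearly holds, yet the entropy functional $H_K(I_i)$ heavily penalizes the uniform residual mass and yields $\mathrm{DCI}\approx 0.6$. The failure the paper isolates is thus a defect of entropy as a disentanglement score for a single latent (near-one-hot rows with uniform leakage still have large entropy), and the counterexample is estimator-independent because the matrix is given. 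Your counterexample instead reuses the uncaptured-factor mechanism that the paper deploys against BetaVAE and FactorVAE, and drives $\mathrm{DCI}$ all the way to $0$, which is a more dramatic violation of Property~\ref{property_1}; the price is that it hinges on how the importance scores $P_{i,j}$ behave for a regressor whose target is independent of all inputs. You correctly flag this and pin it down by choosing per-regressor-normalized random-forest importances, but your closing robustness claim overreaches: the regression onto an uncaptured factor is exactly the \emph{trivial} case, and with, say, Lasso coefficient magnitudes the spurious entries $P_{i,K}$ would vanish, the rows would stay one-hot, and your counterexample would collapse. So your proof stands only for a particular instantiation of DCI, whereas the paper's works for any instantiation; on the other hand, your general observation that $\mathrm{DCI}=\sum_i\rho_i H_K(I_i)$ is a convex combination maximized only by one-hot rows is a clean structural lemma the paper leaves implicit.
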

\begin{proof}
We argue that using entropy as a score of disentanglement of one latent variable is not correct.
Indeed, a score of disentanglement of $c_i$ should be high when $c_i$ reflects one generative factor well, while it reflects other generative factors equally poorly.
However, since the distribution may be close to uniform for these generative factors, the entropy is large.
Let us provide an example that is built on this observation.
Suppose there are 11 generative factors, and 11 is the dimension of the latent representation.
Each latent factor $c_i$ captures primarily a generative factor $z_i$:
\begin{equation*}
I_{i,i} = 0.8,\ I_{i, k}=0.02,\ k \neq i.
\end{equation*}
Then, the DCI score is 0.6, so the DCI assigns a small score to a representation that satisfies Characteristic~\ref{dis_defin_first}.
\end{proof}

\subsubsection{Analysis of whether metrics satisfy the property~\ref{property_2}}

\begin{figure}

\begin{subfigure}[t]{0.48\linewidth}
\includegraphics[width=\linewidth]{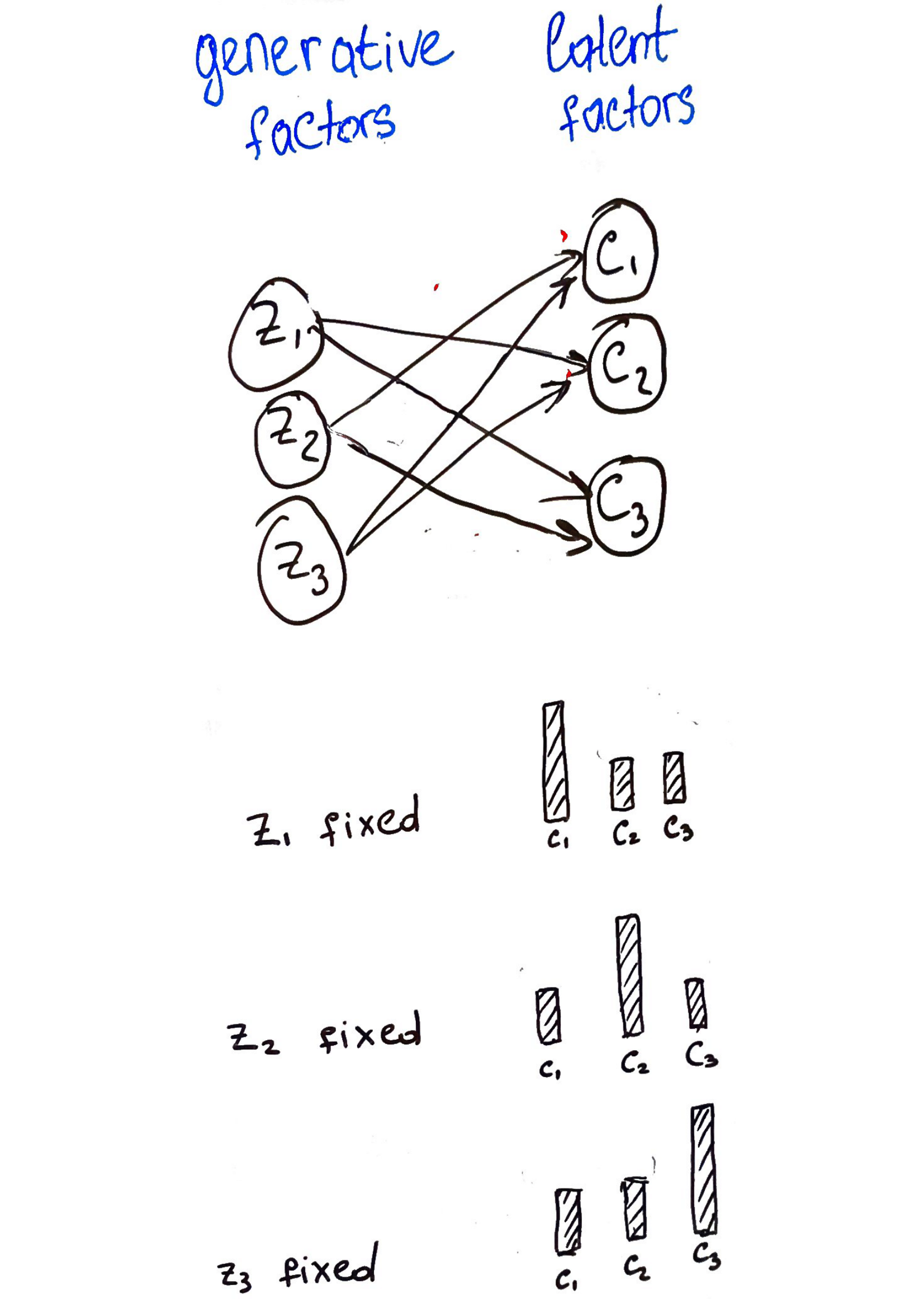}
\caption{Entangled representation with BetaVAE equal to 1}
\label{BetaVae}
\end{subfigure}%
\hfill
\begin{subfigure}[t]{0.48\linewidth}
\includegraphics[width=\linewidth]{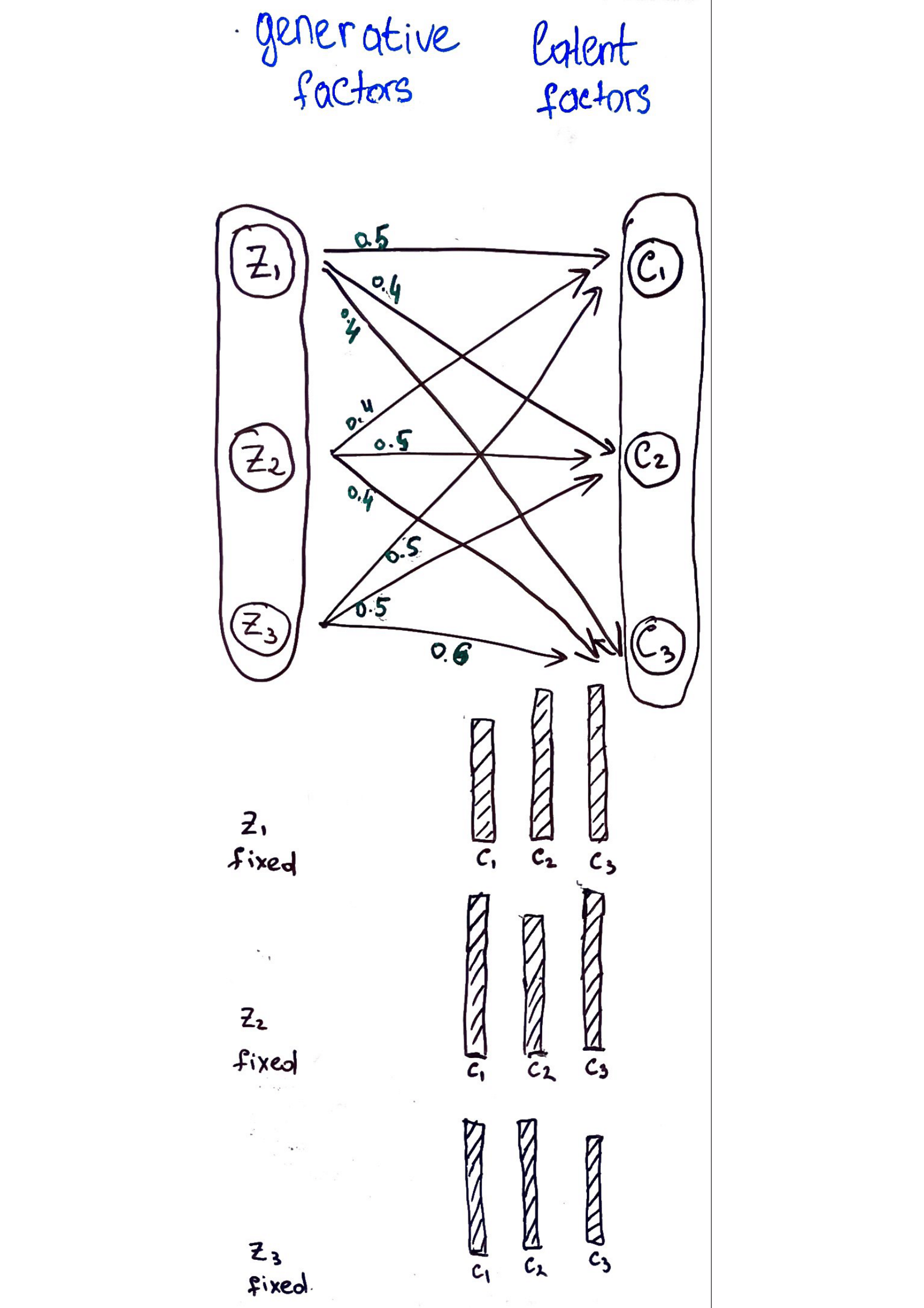}
\caption{Entangled representation with FactorVAE equal to 1}
\label{FactorVae}
\end{subfigure}%
\caption{Failures of BetaVAE and FactorVAE}
\end{figure}

\begin{fact}
\emph{BetaVAE} does not satisfy Property \ref{property_2}.
\end{fact}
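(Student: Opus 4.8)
The plan is to refute Property~\ref{property_2} by exhibiting a single representation that is \emph{entangled} in the sense of Characteristic~\ref{dis_defin_first}, yet receives the maximal BetaVAE score. Since BetaVAE is an accuracy, it suffices to construct an encoder for which the regressor of the final step recovers the fixed generative factor with perfect accuracy, while at least one latent coordinate demonstrably varies with more than one generative factor.

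Concretely, I would work with a dataset whose $K$ generative factors $z_1,\dots,z_K$ vary independently, and take a \emph{triangular} linear encoder. In the simplest instructive case $K=N=2$, set $c_1 = z_1$ and $c_2 = \epsilon\, z_1 + z_2$ for a small constant $\epsilon>0$; more generally $c_i = z_i + \sum_{j<i}\epsilon_{ij} z_j$. The verification that BetaVAE $=1$ then proceeds by analyzing the absolute-difference and mean steps for each choice of the fixed factor $r$. When $z_1$ is held fixed, the coordinate $c_1$ is constant across the pair, so its mean absolute difference is exactly $0$, whereas $c_2$ still varies through $z_2$; when $z_2$ is held fixed, $c_2$ varies only through the small term $\epsilon z_1$, so its mean absolute difference is strictly smaller than that of $c_1$. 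Thus each fixed factor produces a distinct, well-separated feature vector of mean absolute differences, and a linear classifier attains perfect accuracy.

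It remains to argue that this representation is entangled. Here $c_2$ is not invariant to changes in $z_1$: fixing $z_2$ and varying $z_1$ changes $c_2$ through the term $\epsilon z_1$, so a change in the single latent coordinate $c_2$ corresponds to a change in \emph{two} generative factors, directly violating Characteristic~\ref{dis_defin_first}. Hence BetaVAE assigns the maximal score to an entangled representation, contradicting Property~\ref{property_2}; this is the situation depicted in Fig.~\ref{BetaVae}.

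The main obstacle I anticipate is not the algebra but making the perfect-accuracy claim robust. Because BetaVAE applies no per-dimension normalization (unlike FactorVAE), I must choose $\epsilon$ (or the coefficients $\epsilon_{ij}$) small enough that the distinguishing coordinate for each fixed factor attains a \emph{strictly} smaller mean absolute difference than every competitor, ruling out ties and guaranteeing linear separability of the $K$ class-conditional feature vectors. I would also confirm that the argument holds for the empirical batch estimator and not merely in expectation, which amounts to observing that the relevant inequalities are strict and therefore survive finite-sample averaging with high probability.
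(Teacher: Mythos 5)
Your proposal is correct in strategy and essentially sound, but it reaches the conclusion by a genuinely different counterexample than the paper's. The paper also refutes Property~\ref{property_2} by exhibiting an entangled representation with near-perfect BetaVAE score, but its construction is a \emph{stochastic} mixing: three generative factors, three latents, where each latent $c_i$ equals one of two distinct generative factors with probability $0.5$ each, and the claim ``accuracy $=0.9967$'' is verified empirically on $10{,}000$ simulated training points of batch size $128$. Your construction is a \emph{deterministic triangular} linear encoder ($c_1=z_1$, $c_2=\epsilon z_1+z_2$), and you argue the perfect accuracy analytically: the class-conditional mean-absolute-difference vectors are distinct (roughly $(0,m)$ versus $(m,\epsilon m)$), hence linearly separable, and concentration over the batch makes the empirical version go through. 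That buys you a cleaner, non-simulation-based argument; the paper's version buys a counterexample in which \emph{every} latent coordinate is heavily entangled, which makes the violation of Characteristic~\ref{dis_defin_first} impossible to dispute.

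The one point you should fix is the role of $\epsilon$. You insist on taking $\epsilon$ small to secure separability, but smallness is not needed: for a linear classifier it suffices that the $K$ class-mean feature vectors are distinct (and in convex position for $K>2$), which holds already at $\epsilon=1$, where the two classes give $(0,m)$ and $(m,m)$. Meanwhile, taking $\epsilon$ small actively undermines the other half of your argument, because Characteristic~\ref{dis_defin_first} only requires latents to be \emph{relatively} invariant to the other generative factors, so $c_2=\epsilon z_1+z_2$ with tiny $\epsilon$ is arguably (approximately) disentangled. Take $\epsilon$ of order $1$; the classifier argument is unaffected and the entanglement claim becomes unambiguous.
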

\begin{proof}
As a proof, we give a counterexample (see Fig. \ref{BetaVae}).
Suppose there are 3 generative factors from a uniform distribution and the dimension of the latent representation is 3.
Assume that the latent variables are equal to the generative factors with the following probabilities:
\begin{equation*}
p_1 = (0.5,0.5, 0),\  
p_2 = (0, 0.5, 0.5),\  
p_3 = (0.5, 0, 0.5).
\end{equation*}

We generate 10,000 training points with a batch size of 128.
The accuracy of the linear classifier is equal to 0.9967 in this case, but the latent representation does not satisfy Characteristic~\ref{dis_defin_first}.
This shows that BetaVAE does not satisfy Property \ref{property_2}.
\end{proof}

\begin{fact}
\emph{FactorVAE} does not satisfy Property \ref{property_2}.
\end{fact}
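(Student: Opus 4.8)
The plan is to reuse the counterexample strategy of the BetaVAE fact: I would exhibit an explicit representation that violates Characteristic~\ref{dis_defin_first} yet on which FactorVAE attains accuracy $1$ (the situation depicted in Fig.~\ref{FactorVae}). As before I would work in a small symmetric setting---three independent generative factors $z_1,z_2,z_3$ and a three-dimensional latent code---so that every quantity FactorVAE computes is available in closed form and the resulting score is transparent.

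The decisive observation is that FactorVAE never inspects a latent coordinate directly: after fixing a factor $z_r$ and normalising each coordinate by its \emph{global} empirical standard deviation (step~4), the classifier sees only the \emph{index of the coordinate with the smallest within-batch variance} (steps~5--6). Consequently the score is $1$ as soon as the map sending each fixed factor $r$ to this $\arg\min$ index is a bijection on $\{1,2,3\}$, and this can be arranged even when a coordinate depends on several factors. A natural candidate is $c_1=z_1$, $c_2=z_2$, and $c_3=z_1+z_2+z_3$: the coordinate $c_3$ is manifestly entangled---it is not even approximately invariant to $z_1$ or $z_2$---so Characteristic~\ref{dis_defin_first} fails.

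I would then verify that this representation still scores $1$. Fixing $z_1$ (respectively $z_2$) makes $c_1$ (respectively $c_2$) constant, so its within-batch variance is $0$ and it is the unique minimiser, giving the votes $1\mapsto 1$ and $2\mapsto 2$. For fixed $z_3$ the key is the global normalisation: since the global variance of $c_3$ includes the $z_3$ term while its within-batch variance does not, the normalised variance of $c_3$ drops strictly below that of $c_1$ and $c_2$ (which remain equal to $1$), so the vote is $3\mapsto 3$. The three votes are distinct, the classifier labels every batch correctly, and FactorVAE equals $1$, exactly as the BetaVAE fact reports its empirical $0.9967$.

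The step I expect to be the main obstacle is guaranteeing that the $\arg\min$ map is simultaneously \emph{injective}---which is what forces the score to $1$---and compatible with genuine entanglement, because the global normalisation of step~4 couples the coordinates: strengthening a cross-dependence to make some $c_i$ ``more entangled'' also inflates its global standard deviation and can move the $\arg\min$ for a different fixed factor. The delicate part is therefore choosing the dependence structure so that all three responses stay distinct while at least one coordinate demonstrably captures more than one factor; once such a structure is pinned down, the remaining verification is the routine variance computation sketched above, and the failure of Property~\ref{property_2} follows at once.
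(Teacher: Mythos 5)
Your proposal is correct and follows the same strategy as the paper: exhibit a linear--Gaussian counterexample for which the map sending each fixed generative factor to the index of lowest normalized within-batch variance is still a bijection, so the majority-vote classifier attains accuracy $1$ even though the representation violates Characteristic~\ref{dis_defin_first}. The difference is in the instance and the verification. The paper entangles \emph{all three} coordinates (each $c_i$ is a weighted sum of $z_1,z_2,z_3$ with weights between $0.4$ and $0.6$) and checks the score empirically on $10{,}000$ simulated training points; you concentrate the entanglement in the single coordinate $c_3=z_1+z_2+z_3$ and verify the score exactly (normalized within-batch variances $(0,1,2/3)$, $(1,0,2/3)$, $(1,1,2/3)$ for fixed $z_1$, $z_2$, $z_3$ respectively, each with a unique and distinct minimizer). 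Your closed-form argument is cleaner and removes the reliance on simulation, but the failure it exhibits is milder, since two of your three latents are perfectly disentangled and only $c_3$ witnesses the violation of Characteristic~\ref{dis_defin_first}; the paper's example is more damning in that no coordinate is even approximately invariant to the other factors. Either instance suffices to refute Property~\ref{property_2}.
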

\begin{proof}
Let us consider the following example (see Fig. \ref{FactorVae}).
Suppose there are 3 generative factors from a Gaussian distribution with $\mu = 0, \sigma = 1$, and each latent variable is a weighted sum of the generative factors:
\begin{equation*}
\begin{split}
c_1&=0.5 \cdot{}z_1+0.4\cdot{}z_2+0.5\cdot{}z_3\\ 
c_2&=0.4 \cdot{} z_1 + 0.5 \cdot{} z_2 +  0.5 \cdot{} z_3\\
c_3&=0.4 \cdot{} z_1 + 0.4 \cdot{} z_2 +  0.6 \cdot{} z_3.
\end{split}
\end{equation*}
We generate 10,000 training points with a batch size of 128.
The FactVAE disentanglement score is equal to 1 in this case, but the representation does not satisfy Characteristic~\ref{dis_defin_first}.
This shows that FactoVAE does not satisfy Property \ref{property_2}.
\end{proof}

\begin{fact}
DCI does not satisfy Property~\ref{property_2}.
\end{fact}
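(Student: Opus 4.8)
The plan is to refute Property~\ref{property_2} by a single counterexample: a representation that violates Characteristic~\ref{dis_defin_first} yet receives a DCI score arbitrarily close to $1$. Following the strategy of the earlier facts, I would not train an encoder but instead specify the importance matrix $P_{i,j}$ directly (equivalently, a representation inducing it) and compute the score from Eq.~\eqref{weighted_sum}. First I would recall what each ingredient measures: the per-latent score $H_K(I_i)=1+\sum_k I_{i,k}\log_K I_{i,k}$ is maximal ($=1$) exactly when the normalized importances $I_{i,\cdot}$ concentrate on one generative factor, and minimal ($=0$) when they are uniform; DCI is then the $\rho_i$-weighted average of these scores.

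The mechanism I would exploit is the weighting $\rho_i=\sum_j P_{i,j}/\sum_{ij}P_{i,j}$. The key observation is that a latent variable whose importance is spread across several generative factors --- hence one that is \emph{entangled} and violates Characteristic~\ref{dis_defin_first} --- enters DCI only in proportion to its total importance $\sum_j P_{i,j}$. An entangled latent with small total importance is therefore essentially invisible to the metric, even though its mere presence destroys disentanglement.

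Concretely, I would take $K$ generative factors and $K+1$ latent dimensions. Set $c_1,\dots,c_K$ to capture $z_1,\dots,z_K$ perfectly, so $I_{i,i}=1$ and $H_K(I_i)=1$, and give each a large total importance $M$. Let the extra latent $c_{K+1}$ be a uniform mixture of all factors, so $I_{K+1,\cdot}=(1/K,\dots,1/K)$ and $H_K(I_{K+1})=0$, but assign it only a tiny total importance $\varepsilon$. Substituting into Eq.~\eqref{weighted_sum} gives
\[
\text{DCI}=\frac{KM}{KM+\varepsilon}\cdot 1+\frac{\varepsilon}{KM+\varepsilon}\cdot 0=\frac{KM}{KM+\varepsilon},
\]
which tends to $1$ as $\varepsilon/M\to 0$. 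Yet $c_{K+1}$ responds to every generative factor, so the representation is not relatively invariant in that latent and hence fails Characteristic~\ref{dis_defin_first}, contradicting Property~\ref{property_2}.

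The main obstacle I anticipate is making the counterexample \emph{convincing} rather than merely formal: a critic could dismiss $c_{K+1}$ as a redundant coordinate that ought not to count. I would address this by stressing that Characteristic~\ref{dis_defin_first} is a property of \emph{every} latent dimension, so even one non-invariant dimension makes the representation entangled by definition, and by noting that nothing in the DCI computation flags $c_{K+1}$ as redundant --- it is a bona fide latent that the metric simply down-weights. A secondary point to verify is realizability, i.e.\ that some encoder actually produces importance scores of this shape; a weighted-sum construction analogous to the FactorVAE counterexample in Fig.~\ref{FactorVae}, with one weak, fully mixed coordinate, would serve, and I would check that the regressors' importance estimates inherit the intended weights.
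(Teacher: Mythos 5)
Your proposal is correct and uses essentially the same mechanism as the paper's proof: both specify the importance matrix $P_{i,j}$ directly and exploit the weighting $\rho_i=\sum_j P_{i,j}/\sum_{ij}P_{i,j}$ in Eq.~\ref{weighted_sum}, so that a problematic latent with small total importance is drowned out by well-behaved, high-importance latents. The paper's concrete instance ($P_{0,0}=1,\ P_{0,1}=0,\ P_{1,1}=0.09,\ P_{1,0}=0.01$, giving a DCI score of $0.957$) differs only in that its down-weighted latent is nearly uninformative rather than uniformly mixed across all factors; your parametrized family driving DCI to $1$ as $\varepsilon/M\to 0$ is the same argument in a slightly more general form.
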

\begin{proof}
We give a counterexample, which is built on the fact that the weighted sum in Eq.~\ref{weighted_sum} can be large if only one latent variable is disentangled, while the other latent variables do not capture any information about generative factors.
Suppose there are 2 generative factors and the dimension of the latent representation is 2, and the matrix of informativeness is the following:
\begin{equation*}
P_{0, 0} = 1, P_{0,1} = 0, P_{1,1} = 0.09, P_{1,0} =0.01.
\end{equation*}
In this case, the DCI score is 0.957.
This counterexample shows that the DCI score can be close to 1 for the representation does not satisfy Characteristic~\ref{dis_defin_first}.
 \end{proof}

\subsection{SAP and MIG metrics}
In this subsection, we analyze metrics that reflect the following characteristic of disentangled representations.
\begin{characteristic}
\label{dis_defin_second}
\rm
In a \emph{disentangled representation} a change in a single generative factor leads to a change in a single factor in the learned representation (see Fig.~\ref{Second_caharact}).\footnote{\label{hidden_in_a_footnote}This property of representations is also called \emph{completeness}~\citep{eastwood2018framework}.}
\end{characteristic}

\subsubsection{SAP score: Separated Attribute Predictability}
\citet{kumar2017variational} provide a metric of disentanglement that is calculated as follows:
\begin{enumerate}[leftmargin=*,nosep]
\item Compute a \emph{matrix of informativeness} $I_{i,j}$, in which the $ij$-th entry is the linear regression or classification score of predicting the $j$-th generative factor using only the $i$-th variable in the latent representation.
\item For each column in the matrix of informativeness $I_{i,j}$, which corresponds to a generative factor, calculate the difference between the top two entries (corresponding to the top two most predictive latent factors). 
The average of these differences is the final score, which is called the SAP: 
\begin{equation*}
\text{SAP}(\bold{c}, \bold{z}) = \frac{1}{K} \sum_k \left(I_{i_k, k} - \max_{l\neq i_k} I_{l,k}\right),
\end{equation*}
where $i_k = \argmax_i I_{i,k}$. 
\end{enumerate}

\subsubsection{MIG: Mutual Information Gap}
\citet{chen2018isolating} propose a disentanglement metric, Mutual Information Gap (MIG), that uses mutual information between the $j$-th generative factor and the $i$-th latent variable as a notion of informativeness between them.
The \emph{mutual information} between two variables $c$ and $z$ is defined as
\begin{equation*}
I(c;z) = H(z) - H(z|c),
\end{equation*}
where $H(z)$ is the entropy of the variable $z$.
Mutual information measures how much knowing one variable reduces uncertainty about the other.
A useful property of mutual information is that it is always non-negative $I(c;z) > 0$. 
Moreover, $I(c;z)$ is equal to 0 if and only if $c$ and $z$ are independent.
Also, mutual information achieves its maximum if there exists an invertible relationship between $c$ and $z$. 
The following algorithm calculates the MIG score:
\begin{enumerate}[leftmargin=*,nosep]
\item Compute a \emph{matrix of informativeness} $I_{i,j}$, in which the $ij$-th entry is the mutual information between the $j$-th generative factor and the $i$-th latent variable.
\item For each column of the score matrix $I_{i,j}$, which corresponds to a generative factor, calculate the difference between the top two entries, and normalize it by dividing by the entropy of the corresponding generative factor.
The average of these normalized differences is the MIG score:
\begin{equation*}
\text{MIG}(\bold{c}, \bold{z}) = \frac{1}{K} \sum_k \frac{I_{i_k, k} - \max_{l\neq i_k} I_{l,k}}{H(z_k)},
\end{equation*}
where $i_k = \argmax_i I_{i,k}$.
\end{enumerate}

\subsubsection{Analysis of whether metrics satisfy the property~\ref{property_1}}
\begin{fact}
\label{SAP_score_ex}
SAP does not satisfy Property \ref{property_1}.
\end{fact}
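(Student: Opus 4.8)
The plan is to prove the fact by exhibiting an explicit representation that satisfies Characteristic~\ref{dis_defin_second} yet is assigned an arbitrarily small SAP score, thereby contradicting Property~\ref{property_1}. The leverage point is that SAP's informativeness entry $I_{i,j}$ is a \emph{single-variable} regression (or classification) score, and SAP rewards only the \emph{gap} between the best and the second-best latent in each column. Characteristic~\ref{dis_defin_second}, however, constrains only \emph{which} latent dimension reacts to a change in a given generative factor; it does not demand that this single latent predict the factor well, nor does it forbid several generative factors from being funnelled into the same latent coordinate. I would exploit exactly this slack between "which latent moves" and "how predictable the factor is from that latent".

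First I would take $K$ mutually independent, standardized generative factors $z_1,\dots,z_K$ and a latent representation of dimension $N=K$ in which a single coordinate absorbs all of them, for instance $c_1 = z_1+\dots+z_K$, while $c_2,\dots,c_K$ are independent noise carrying no information about $z$. I would then verify that this representation satisfies Characteristic~\ref{dis_defin_second}: changing any single $z_k$ moves only $c_1$ and leaves the noise coordinates untouched, so a change in a single generative factor indeed induces a change in a single latent factor.

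Next I would compute the informativeness matrix. For each factor $z_k$, the univariate regression of $z_k$ on $c_1$ has coefficient of determination $R^2 = \mathrm{Cov}(z_k,c_1)^2/(\mathrm{Var}(z_k)\,\mathrm{Var}(c_1)) = 1/K$, while every noise coordinate yields $I_{i,k}\approx 0$. Hence in each column the top entry is $I_{1,k}=1/K$ and the second-largest is $\approx 0$, so $\mathrm{SAP} = \tfrac{1}{K}\sum_k (1/K - 0) = 1/K$, which tends to $0$ as $K$ grows. This produces a family of representations obeying Characteristic~\ref{dis_defin_second} whose SAP score is as close to $0$ as desired, which is the required violation of Property~\ref{property_1}. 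A second, qualitatively different witness would instead keep the factors on distinct latents but relate them nonlinearly, e.g.\ $c_k = z_k^2$ with $z_k$ symmetric about $0$, which forces the single-variable \emph{linear} score, and therefore SAP, to $0$ as well.

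The main obstacle is not the arithmetic but the modelling step: I must make sure the witness genuinely satisfies Characteristic~\ref{dis_defin_second} as literally stated, since the characteristic is phrased in terms of which latent coordinate changes rather than through an explicit factorisation of the representation. I therefore need to argue carefully that collapsing all factors into one latent (or using a nonlinear link) is admissible under the characteristic, and to confirm that the univariate regression/classification score underlying SAP really behaves as computed — in particular that the noise coordinates contribute negligible informativeness, so that the second-best entry in each column is close to $0$ and the gap reduces to the small top entry.
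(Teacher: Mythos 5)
Your proposal is correct, and it in fact contains two witnesses, one of which coincides with the paper's argument and one of which is genuinely different. The paper's proof takes your second route: it keeps the factors on distinct latents and uses a highly nonlinear but invertible coordinate-wise link ($c_1=z_1^{15}$, $c_2=z_2^{15}$ with $z_i\sim U([-1,1])$), so the representation is disentangled in every reasonable sense yet the univariate linear $R^2$ entries, and hence the SAP gaps, are small; the paper then reports an empirical SAP of $0.32$ on $10{,}000$ samples rather than an exact value. Your $c_k=z_k^2$ variant is the same idea pushed to the extreme of exactly zero linear correlation, which yields SAP exactly $0$ analytically, at the price of losing invertibility (the sign of $z_k$ is discarded). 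Your first witness, $c_1=z_1+\cdots+z_K$ with pure noise elsewhere, is a different mechanism entirely: it exploits the $1/K$ dilution of the univariate $R^2$ when many factors share one latent coordinate, and the paper's own discussion of the difference between Characteristics~\ref{dis_defin_first} and~\ref{dis_defin_second} explicitly admits such collapsed representations under Characteristic~\ref{dis_defin_second}, so the witness is legitimate and your closed-form score $1/K\to 0$ needs no simulation. What each buys: the paper's (and your second) witness is the more persuasive counterexample, because the representation there also satisfies Characteristic~\ref{dis_defin_first} and is fully informative, so SAP is shown to fail on an unimpeachably good representation; your first witness is cleaner arithmetically but rests on a representation that satisfies Characteristic~\ref{dis_defin_second} only in its literal reading while being uninformative, which a skeptical reader could dismiss as a degenerate corner of the definition rather than evidence that SAP mis-scores genuinely disentangled representations.
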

\begin{proof}
We claim that it is incorrect to use the $R^2$ score of linear regression as informativeness between latent variables and generative factors.
Indeed, a linear regression cannot capture non-linear dependencies.
Thus, informativeness, which is calculated using the $R^2$ score of a linear regression, may be low if each generative factor is a non-linear function of some latent variable.
Let us give an example that is built on this observation.
Suppose there are 2 generative factors from the uniform distribution $U([-1,1])$ and the dimension of the latent representation is 2.
Let us assume the latent variables are obtained from the generative factors according to the following equations:
\begin{equation*}
c_1 = z_1^{15},\ 
c_2 = z_2^{15}.
\end{equation*}
For this representation, we generate 10,000 examples and obtain the SAP score equal to 0.32.
It proves that SAP can assign a low score to a representation that satisfies Characteristic~\ref{dis_defin_second}.
\end{proof}

\begin{fact}
MIG satisfies Property~\ref{property_1}.
\end{fact}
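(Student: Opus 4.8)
The plan is to show that any representation satisfying Characteristic~\ref{dis_defin_second} attains the maximal MIG score of $1$, which is certainly ``high.'' First I would translate the characteristic into the language of mutual information. Characteristic~\ref{dis_defin_second} says that each generative factor $z_k$ drives exactly one latent coordinate $c_{i_k}$ and leaves all others unchanged; I would read this as the conjunction of two statements: (i) for every $l \neq i_k$ the latent variable $c_l$ is a function of the generative factors other than $z_k$, so that (assuming the generative factors are mutually independent) $c_l$ is independent of $z_k$; and (ii) there is an invertible relationship between $c_{i_k}$ and $z_k$.

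From (i) and the stated property that $I(c;z) = 0$ if and only if $c$ and $z$ are independent, I would immediately conclude that $I_{l,k} = 0$ for all $l \neq i_k$, so that the subtracted term $\max_{l \neq i_k} I_{l,k}$ vanishes for every column $k$. From (ii) and the stated property that mutual information is maximized under an invertible relationship, together with the identity $I(c_{i_k}; z_k) = H(z_k) - H(z_k \mid c_{i_k})$, I would argue that knowing $c_{i_k}$ pins down $z_k$, hence $H(z_k \mid c_{i_k}) = 0$ and therefore $I_{i_k, k} = H(z_k)$.

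Substituting these two facts into the definition of MIG, each summand collapses to $\frac{H(z_k) - 0}{H(z_k)} = 1$, and averaging over the $K$ generative factors yields $\text{MIG}(\mathbf{c}, \mathbf{z}) = 1$. This establishes Property~\ref{property_1} for MIG, and moreover shows that the normalization by $H(z_k)$ is precisely what is needed to turn the absolute mutual-information gap into the value $1$ independently of how much entropy each factor carries.

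The main obstacle I anticipate is pinning down step (ii) rigorously: Characteristic~\ref{dis_defin_second} as phrased only asserts that a change in $z_k$ produces \emph{a} change in $c_{i_k}$, which by itself guarantees dependence but not that $c_{i_k}$ determines $z_k$. To reach $I_{i_k, k} = H(z_k)$ I must invoke the disentanglement convention already used for the rectangle example, namely that the captured latent coordinate is an \emph{invertible} function of its generative factor; without this the argument only delivers each term strictly below $1$. Making that reading explicit, and verifying that the maximal value of $I(c_{i_k}; z_k)$ under an invertible relationship is exactly $H(z_k)$ rather than merely ``large,'' is the crux of the proof.
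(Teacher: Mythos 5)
Your proof follows essentially the same route as the paper's: identify for each generative factor $z_k$ the unique latent $c_{i_k}$ that captures it, conclude $I_{i_k,k} = H(z_k)$ and $I_{l,k} = 0$ for $l \neq i_k$, and observe that each normalized gap then equals $1$. You are somewhat more explicit about the hypotheses this requires (independence of the remaining latents from $z_k$, and an invertible rather than merely dependent relationship), which the paper glosses over with ``$\sim$'' approximations, but the argument is the same.
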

\begin{proof}
Indeed, in a disentangled representation each generative factor is primarily captured in only one latent dimension.
This means that for each generative factor $z_j$, there is exactly one latent factor $c_{i_j}$ for which $z_j$ is a function of $c_{i_j}$: $z_j \sim f(c_{i_j})$.
Therefore,
\begin{equation*}
I_{i_ j, j} = H(z_j) - H(z_j|c _{i,j}) \sim H(z_j),
\end{equation*}
whereas for other latent variables $I_{k,j} = I(c_k, z_j) \sim 0$.
Consequently, according to MIG, the score of disentanglement of each generation factor is close to 1:
\begin{equation}
\frac{I_{i_j, j} -\max_{k \neq i_j} I_{k,j}}{H(z_j)} \sim 1.
\end{equation}
Therefore, the average of these scores is also close to 1.
This shows that MIG always assigns a high score to a representation that satisfies Characteristic~\ref{dis_defin_second}.
\end{proof}

\subsubsection{Analysis of whether metrics satisfy the property~\ref{property_2}}

\begin{fact}
SAP does not satisfy Property~\ref{property_2}.
\end{fact}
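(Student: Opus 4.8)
The plan is to prove the claim by exhibiting a single counterexample: a representation that does \emph{not} satisfy Characteristic~\ref{dis_defin_second} yet is assigned a high SAP score. The crucial observation, mirroring the argument behind Fact~\ref{SAP_score_ex}, is that SAP builds its matrix of informativeness from the $R^2$ score of a \emph{linear} regression. A purely linear entanglement, in which one factor is spread across several latent dimensions, inflates several entries of the corresponding column at once and therefore shrinks the top-two gap, so SAP already penalizes it (the FactorVAE-style weighted-sum representation receives a small SAP). The leverage point is instead that linear regression is blind to even, zero-correlation dependencies: if a factor enters a second latent through a term such as $z^2$, it genuinely changes that latent -- violating Characteristic~\ref{dis_defin_second} -- while contributing nothing to the linear $R^2$.

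Concretely, I would take two independent factors $z_1,z_2\sim U([-1,1])$ and set $c_1 = z_1 + \beta z_2^2$ and $c_2 = z_2 + \beta z_1^2$ for a fixed $\beta>0$. A change in $z_1$ moves both $c_1$ and $c_2$ (the latter through $z_1^2$), and symmetrically for $z_2$, so the representation is manifestly entangled in the sense of Characteristic~\ref{dis_defin_second}. The next step is to compute the informativeness matrix $I_{i,k}=\operatorname{corr}(c_i,z_k)^2$. Because $E[z_1^2 z_1]=E[z_1^3]=0$, the quadratic cross-terms are uncorrelated with the linear factors, so the off-diagonal entries $I_{2,1}$ and $I_{1,2}$ vanish, while each diagonal entry stays bounded away from zero. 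Each column thus has one dominant entry and a second entry near zero, producing a large top-two gap and hence a large SAP despite the entanglement.

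The last step is to quantify the gap and record a concrete number, following the paper's convention of drawing about $10{,}000$ samples. The diagonal score is $\operatorname{Var}(z_k)/(\operatorname{Var}(z_k)+\beta^2\operatorname{Var}(z^2))$, which for small $\beta$ is close to $1$, so SAP can be driven arbitrarily close to $1$; for a visibly entangled choice of $\beta$ it still sits well above any reasonable ``low'' threshold. Reporting such a value completes the counterexample.

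The main obstacle I anticipate is the tension between the two requirements: strengthening the entanglement (larger $\beta$) inflates the variance that the nuisance square term injects into each latent, which lowers the diagonal $R^2$ and hence the SAP gap. The argument must therefore decouple \emph{the definitional notion of entanglement}, which registers any nonzero dependence of a factor on a second latent, from \emph{SAP's linear informativeness}, which ignores even-order dependence entirely. Choosing an even cross-term, so that its linear correlation is exactly zero, is what cleanly separates the two, and checking that the diagonal $R^2$ stays high under this choice is the one calculation that needs care.
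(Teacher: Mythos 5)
Your proposal is correct, and it rests on the same leverage point as the paper's proof---SAP's informativeness is a linear $R^2$, so a generative factor can leak into a second latent dimension through a non-linear term without shrinking the top-two gap---but your construction is genuinely different and in one respect cleaner. The paper uses three latents, $c_1=z_1$, $c_2=z_1^{25}+z_2^{25}$, $c_3=z_2$, where the cross-dependence is carried by high \emph{odd} powers; these are only weakly (not exactly un-) correlated with the factors over $U([-1,1])$, so the paper certifies that the off-diagonal entries are small by an empirical simulation ($10{,}000$ samples, reported SAP $=0.98$). Your even-power cross-terms $c_1=z_1+\beta z_2^2$, $c_2=z_2+\beta z_1^2$ make the off-diagonal covariances vanish \emph{exactly} by symmetry ($E[z^3]=0$), so the informativeness matrix is available in closed form, $I_{k,k}=\mathrm{Var}(z_k)/(\mathrm{Var}(z_k)+\beta^2\mathrm{Var}(z^2))$ with zero off the diagonal, and SAP equals the diagonal entry, which can be placed as close to $1$ as desired while the violation of Characteristic~\ref{dis_defin_second} persists for every $\beta>0$. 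You also correctly flag the one delicate point shared by both constructions: the strength of the entanglement trades off against the diagonal $R^2$, so the example is most convincing at a moderate $\beta$ (e.g.\ $\beta=0.5$ gives SAP $\approx 0.94$ with a plainly visible quadratic cross-term). What the paper's version buys is a picture in which one extra latent dimension jointly captures both factors; what yours buys is an exact, parameterized analytic argument that does not depend on sampling.
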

\begin{proof}
A high SAP score indicates that the majority of generative factors is captured linearly in only one latent dimension.
However, the SAP metric does not penalize the existence of several latent factors that capture the same generative factor non-linearly.
Let us consider the following example.
Suppose there are 2 generative factors from the uniform distribution $U([-1,1])$, and the dimension of the latent representation is 3.
Let us assume that the latent factors are obtained from the generative factors according to the following equations:
\begin{equation*}
c_1 = z_1,\ 
c_2 = z_1^{25} + z_2^{25},\
c_3 = z_2.
\end{equation*}
For this latent representation, a change in each generative factor leads to a change in several latent factors, but the SAP score is equal to 0.98.
This shows that the SAP score can be close to 1 for a latent representation that does not satisfy Characteristic~\ref{dis_defin_first}.
\end{proof}
\begin{fact}
MIG satisfies Property~\ref{property_2}.
\end{fact}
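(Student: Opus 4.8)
The plan is to argue the contrapositive of what the Property~\ref{property_1} proof established: there I showed that when every generative factor is captured by a single latent variable each normalized gap is close to $1$, so that the average is close to $1$; here I want to show that whenever the representation violates Characteristic~\ref{dis_defin_second}, at least one such gap collapses to $0$ and pulls the MIG average down. First I would make the failure condition precise. A representation does not satisfy Characteristic~\ref{dis_defin_second} exactly when there is some generative factor $z_k$ whose variation is reflected in two distinct latent coordinates $c_a$ and $c_b$; informally, $z_k$ can be recovered from either, so $z_k \sim f_a(c_a)$ and $z_k \sim f_b(c_b)$.

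The key step is then to translate this into statements about the $k$-th column of the informativeness matrix. Using the two facts about mutual information recorded above, namely that $I(c;z) = H(z) - H(z \mid c)$ and that $I(c;z)$ attains its maximum when there is an invertible relationship between $c$ and $z$, the hypothesis that both $c_a$ and $c_b$ encode $z_k$ forces $I_{a,k} \sim H(z_k)$ and $I_{b,k} \sim H(z_k)$. Consequently the two largest entries of column $k$ are both close to $H(z_k)$, so
\begin{equation*}
\frac{I_{i_k, k} - \max_{l \neq i_k} I_{l,k}}{H(z_k)} \sim 0 .
\end{equation*}
Since every summand of MIG lies in $[0,1]$, replacing the entangled factors' terms by $0$ shows that MIG cannot be close to $1$; in the fully entangled case, where every generative factor is shared across several latent coordinates, all terms vanish and the score tends to $0$. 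I would also contrast this with SAP: SAP measures the same gap through a linear $R^2$ score, which is blind to the nonlinear dependence of $c_b$ on $z_k$, so its second-largest entry can stay near $0$ and the gap near $1$; mutual information, by contrast, detects that dependence, which is precisely why MIG succeeds on the example that defeated SAP in Fact~\ref{SAP_score_ex}.

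The step I expect to be the main obstacle is the aggregation, together with making the implication ``$z_k$ influences $c_a$ and $c_b$'' $\Rightarrow$ ``$I_{a,k}, I_{b,k} \sim H(z_k)$'' rigorous. On the aggregation side, a single entangled factor only lowers the $K$-term average by about $1/K$, so the clean conclusion that MIG is low really requires the entanglement to affect a non-negligible fraction of the generative factors; the faithful statement is that each violating factor contributes a term near $0$, and that MIG is bounded away from $1$ as soon as one factor is shared and decreases monotonically with the number of entangled factors. On the implication side, ``a change in $z_k$ changes $c_b$'' is a sensitivity condition and does not by itself guarantee that $z_k$ is fully recoverable from $c_b$, so in general one only obtains a positive lower bound on $I_{b,k}$ rather than equality with $H(z_k)$; as in the Property~\ref{property_1} proof, I would adopt the idealization used throughout the paper, that a latent coordinate reflecting a generative factor reflects it essentially invertibly, under which the gap genuinely collapses.
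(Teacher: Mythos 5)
Your proposal is correct and follows essentially the same route as the paper's (two-sentence) proof: the paper argues the contrapositive, that a high MIG score forces each generative factor to be captured in only one latent dimension, which is exactly the mechanism you spell out via the collapse of the top-two gap in column $k$. Your version is in fact more explicit than the paper's --- in particular your caveats about aggregation over $K$ factors and about sensitivity versus invertibility address gaps the paper silently glosses over --- but the underlying argument is the same.
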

\begin{proof}
A high MIG score indicates that the majority of generative factors is captured in only one latent dimension. 
Consequently, a change in one of the generative factors entails a change primarily in only one latent dimension.
\end{proof}

A summary of the results of our analysis is given in Table~\ref{result-table}.

\begin{table}[h]
  \caption{Summary of facts about proposed metrics of disentangled representations.}
  \label{result-table}
  \centering
  \begin{tabular}{lcc}
    \toprule
	Metric & Satisfies Property~\ref{property_1} & Satisfies Property~\ref{property_2} \\
    \midrule
BetaVAE&	No&	 No\\
FactorVAE& No&	No\\
DCI&	 No& No\\
SAP&	No&	 No\\
MIG&	Yes&	 Yes\\
    \bottomrule
  \end{tabular}
\end{table}

\subsection{Difference between Characteristics~\ref{dis_defin_first} and~\ref{dis_defin_second}}
The Characteristics~\ref{dis_defin_first} and~\ref{dis_defin_second} of a disentangled representation have important differences.
Indeed, a representation in which several latent factors capture one common generative factor satisfies a Characteristic~\ref{dis_defin_first}, but not a Characteristic~\ref{dis_defin_second}.
On the other hand, a representation in which a latent variable captures multiple generative factors while there are no other latent variables that capture these generative factors does not satisfy Characteristic~\ref{dis_defin_first}, but satisfies Characteristic.~\ref{dis_defin_second}.

Consider, for example, the following latent representation of dimension 4 of the dataset containing rectangles of different shapes shown in Fig.~\ref{ex}:
$$
z_1 = x,\ 
z_2 = x^2,\ 
z_3 = y,\ 
z_4 = y^3
$$
where $x$ is the length of a rectangle, while $y$ is the width of a rectangle.
It satisfies Characteristic~\ref{dis_defin_first}, but not a Characteristic~\ref{dis_defin_second}.
Conversely, any one-dimensional latent representation of the same dataset would satisfy Characteristic~\ref{dis_defin_second}, but not necessarily Characteristic~\ref{dis_defin_first}.

\section{A New Metric of Disentanglement, \OurMeasure} 
The previous metrics were designed to reflect only one out of two characteristics of disentangled representations. 
We believe that a metric should reflect both of them: Characteristics~\ref{dis_defin_first} and~\ref{dis_defin_second}.
Moreover, following \citep{eastwood2018framework} we think that the metric should also reflect the \emph{informativeness} of a representation.

Formally, this means that we believe that a \emph{disentangled representation} satisfies the following characteristic.

\begin{characteristic}
\label{dis_defin_third}
\rm
In a disentangled representation, we can choose a subset of latent variables:
$c' = \left\{ c_{i_1}, \dots, c_{i_K}\right\}$,     
that satisfy Characteristic~\ref{dis_defin_first} and Characteristic~\ref{dis_defin_second}.
Moreover, a disentangled representation should contain nearly all information about generative factors, i.e., it should have a high degree of \emph{informativeness}~\citep{eastwood2018framework}.
\end{characteristic}

With this in mind, we propose a new metric of disentanglement of representation, called \OurMeasure.

\subsection{Definition of \OurMeasure}
The algorithm for calculating \OurMeasure{} is completely different from previously proposed metrics in the following sense.
Differently from \OurMeasure{}, all conventional metrics do not make one-to-one correspondence between generative factors and latent factors.
On the other hand, the main part of algorithm calculating \OurMeasure{} is finding for each generative factor the correspondent latent factor.
At the first stage, for each generative factor, \OurMeasure{} selects a set of candidates from latent variables that may correspond to the generative factor (see Fig. \ref{first_stage}).
In the second stage, for each generative factor, \OurMeasure{} selects the best latent variable from the set of corresponding candidates (see Fig. \ref{second_stage}).
\begin{figure}

\begin{subfigure}[t]{0.48\linewidth}
\includegraphics[width=\linewidth]{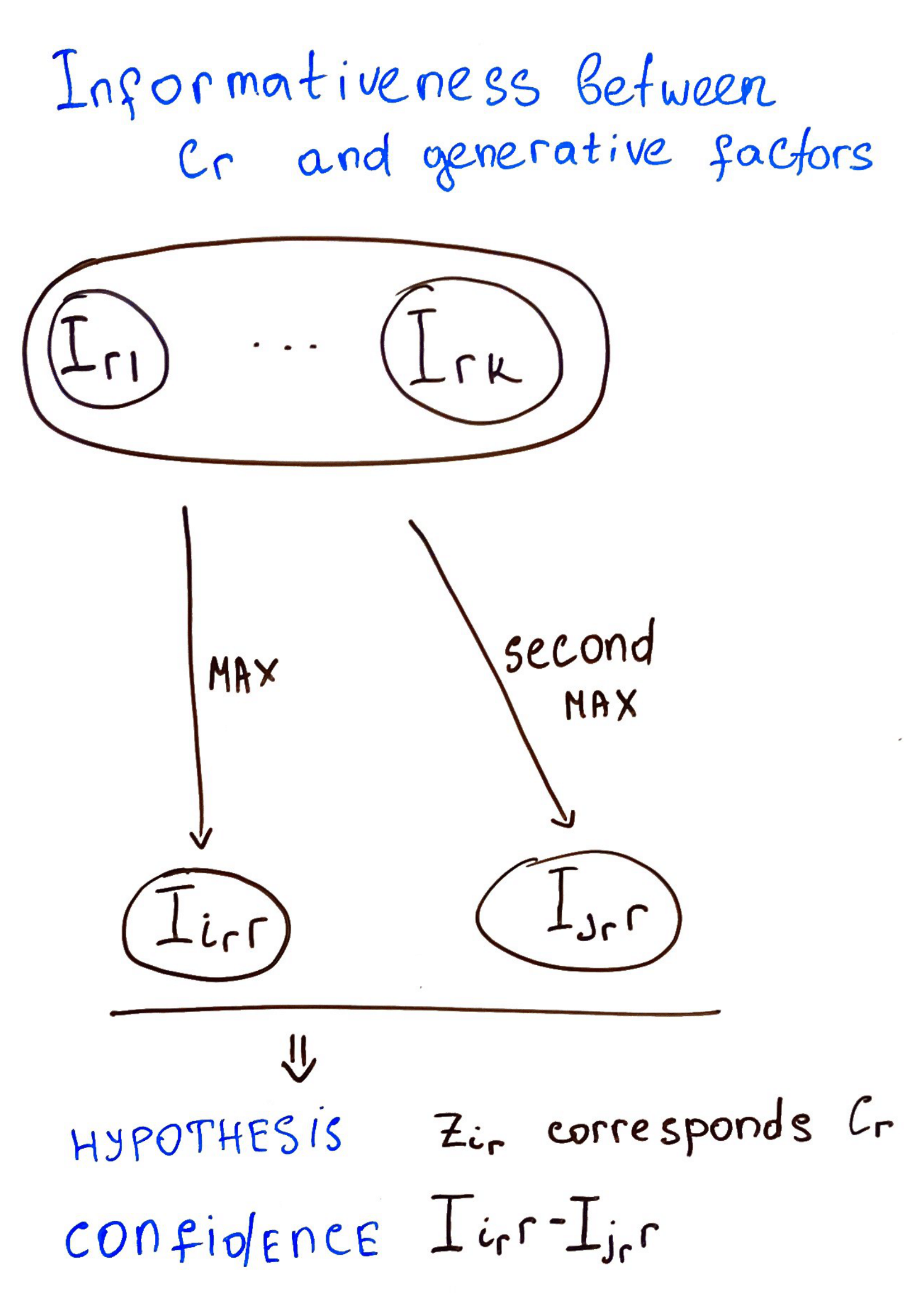}
\caption{First stage of \OurMeasure{}: making hypothesis}
\label{first_stage}
\end{subfigure}%
\hfill
\begin{subfigure}[t]{0.48\linewidth}
\includegraphics[width=\linewidth]{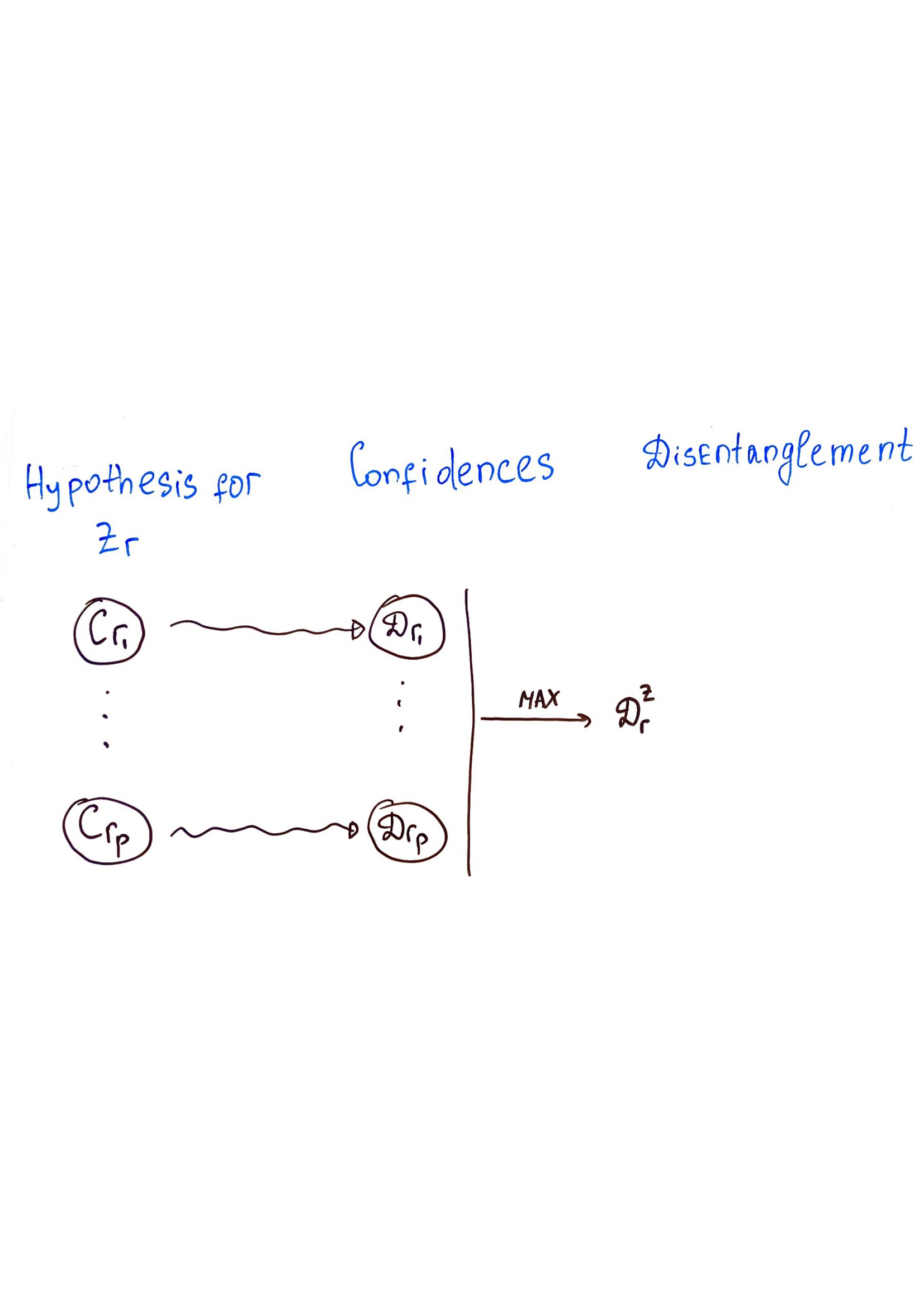}
\caption{First stage of \OurMeasure{}: choosing the best hypothesis}
\label{second_stage}
\end{subfigure}%
\caption{\OurMeasure}
\end{figure}

More formally, first, we create a matrix of informativeness $I_{i,j}$, in which the $ij$-th entry is the mutual information between the $j$-th generative factor and the $i$-th variable in the latent representation.

\begin{enumerate}[leftmargin=*,nosep]
\item For each latent variable $c_i$, find the generative factor $z_{j_i}$ that it reflects the most: $j_i = \argmax_j I_{i,j}$. 
\item Calculate the disentanglement for each latent variable: $D_{i} = I_{i, j_i} - \max_{k\neq j_i} I_{i,k}$. 
\item For each generative factor $z_j$, find the most disentangled latent factor $c_{k_j}$, that reflects $z_j$: $k_j = \argmax_{l\in \mathbb{I}_j} D_l$, where $\mathbb{I}_j = \{i: z_{j_i} = z_j\}$. 
\item For each generative factor $z_j$, calculate the disentanglement score $D^z_j$, which is equal to $D_{k_j}$ if there is at least one latent factor, that captures $z_j$, otherwise, it is 0. 
\item Finally, the \emph{disentanglement score} of a latent representation according to \OurMeasure{} is the normalized sum of $D^z_j$:
\begin{equation*}
\text{\OurMeasure}(\bold{c}, \bold{z}) = \frac{\sum_{j=1}^K D^z_j}{\sum_{j=1}^K H(z_j)},
\end{equation*}
 where $H(z_j)$ is the entropy of $z_j$.
\end{enumerate}

\subsection{Analysis of whether \OurMeasure{} satisfies Property~\ref{property_1}}
\begin{fact}
\OurMeasure{} satisfies  Property~\ref{property_1}.
\end{fact}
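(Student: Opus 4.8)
The plan is to show that for any representation satisfying Characteristic~\ref{dis_defin_third}, each generative factor $z_j$ contributes nearly its full entropy $H(z_j)$ to the numerator of \OurMeasure, so that the final normalized sum is close to $1$. The argument is essentially the MIG-style computation used earlier for Property~\ref{property_1}, but carried out one generative factor at a time and made robust against the extra latent variables that the full representation may contain on top of the privileged subset.

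First I would unpack the hypothesis. By Characteristic~\ref{dis_defin_third} there is a subset $c' = \{c_{i_1}, \dots, c_{i_K}\}$ that satisfies Characteristics~\ref{dis_defin_first} and~\ref{dis_defin_second} and is highly informative. Combining the two characteristics with informativeness yields, for every generative factor $z_j$, a distinguished latent variable $c_{i_j} \in c'$ for which $z_j$ is (nearly) an invertible function of $c_{i_j}$, while $c_{i_j}$ is (nearly) invariant to the remaining factors. In terms of the informativeness matrix this reads
\begin{equation*}
I_{i_j, j} \sim H(z_j), \qquad I_{i_j, k} \sim 0 \quad (k \neq j),
\end{equation*}
where the first relation uses that the mutual information of an invertible pair is maximal and equal to $H(z_j)$, as recalled in the discussion of MIG.

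Then I would follow $c_{i_j}$ through the algorithm. In Step~1, since $I_{i_j, j} \sim H(z_j)$ dominates the other entries $I_{i_j, k} \sim 0$ of row $i_j$, we get $j_{i_j} = j$, so $i_j \in \mathbb{I}_j$; in particular $\mathbb{I}_j \neq \emptyset$, so Step~4 sets $D^z_j = D_{k_j}$ rather than $0$. In Step~2,
\begin{equation*}
D_{i_j} = I_{i_j, j} - \max_{k \neq j} I_{i_j, k} \sim H(z_j) - 0 = H(z_j).
\end{equation*}
The key remaining point, which needs care precisely because the full representation may contain latent variables outside $c'$, is to control $D_{k_j} = \max_{l \in \mathbb{I}_j} D_l$ in Step~3. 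On the one hand $D_{k_j} \ge D_{i_j} \sim H(z_j)$, since $i_j \in \mathbb{I}_j$. On the other hand, for any $l \in \mathbb{I}_j$ we have $j_l = j$ by definition of $\mathbb{I}_j$, hence $D_l = I_{l, j} - \max_{k \neq j} I_{l, k} \le I_{l, j} \le H(z_j)$, using that mutual information is bounded above by the entropy of the generative factor. These two bounds pin down $D^z_j = D_{k_j} \sim H(z_j)$, regardless of how many spurious latent variables fall in $\mathbb{I}_j$. Summing over the $K$ generative factors yields $\sum_j D^z_j \sim \sum_j H(z_j)$, whence
\begin{equation*}
\text{\OurMeasure}(\bold{c}, \bold{z}) = \frac{\sum_{j=1}^K D^z_j}{\sum_{j=1}^K H(z_j)} \sim 1.
\end{equation*}

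The main obstacle is exactly this last control: showing that extra latent variables assigned to $z_j$ cannot pull $D^z_j$ below $H(z_j)$. I expect it to be handled cleanly by the uniform upper bound $D_l \le H(z_j)$ on $\mathbb{I}_j$, so that the $\argmax$ in Step~3 must return a value squeezed between $D_{i_j} \sim H(z_j)$ and $H(z_j)$, and is therefore itself $\sim H(z_j)$. The only other point deserving a sentence is why $j_{i_j} = j$ holds robustly, i.e.\ why the privileged latent variable is never captured away from its own factor; this follows from the gap between $I_{i_j,j} \sim H(z_j)$ and $I_{i_j,k} \sim 0$ guaranteed by Characteristic~\ref{dis_defin_first}.
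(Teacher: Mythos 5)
Your proposal is correct and follows essentially the same route as the paper's own proof: identify the privileged latent variable $c_{i_j}$ for each generative factor, show $D_{i_j}\sim I_{i_j,j}\sim H(z_j)$ using invertibility/informativeness, and conclude that the normalized sum is close to $1$. Your sandwich bound $D_{i_j}\le D_{k_j}\le H(z_j)$ for the $\argmax$ over $\mathbb{I}_j$ in Step~3 is a welcome extra step of rigor that the paper's proof passes over silently (it implicitly assumes the selected latent variable achieves $D^z_j=H(z_j)$ without ruling out interference from latent variables outside $c'$).
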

\begin{proof}
Indeed, in a representation that satisfies Characteristic~\ref{dis_defin_third}, there is a subset $c'$ of latent variables, in which each latent variable is sensitive to changes in one generative factor only. 
Moreover, for each generative factor $z_j$ there is only one latent variable $c_{i_j} \in c'$ that captures the changes in $z_j$.
Consequently, $c_{i_j}$ is a function of $z_j$: $c_{i_j} = f_j(z_j)$, while the other latent factors are invariant to changes in $z_j$.
This means that, $D_{i_j} = I_{i_j, j} - \max_{k\neq j} I_{i_j,k} = I_{i_j, j}$.
Also, the disentangled representation should have a high degree of \emph{informativeness}.
Consequently, the latent variables in $c'$ should capture all the information contained in $z_j$.
But only $c_{j, i}$ contains some information about $z_j$.
Therefore, $ I_{i_j, j} = H(z_j)$, and $D^z_j = H(z_j)$.
Consequently, \OurMeasure{} is equal to 1 in this case.
\end{proof}

\subsection{Analysis of whether \OurMeasure{}  satisfies Property~\ref{property_2}}
\begin{fact}
\OurMeasure{} satisfies  Property~\ref{property_2}.
\end{fact}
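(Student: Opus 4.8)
The plan is to prove the contrapositive. Rather than starting from a representation that fails Characteristic~\ref{dis_defin_third} and arguing the score is small, I would assume that $\text{\OurMeasure}(\bold{c},\bold{z})$ is close to $1$ and deduce that the representation must satisfy Characteristic~\ref{dis_defin_third}. This mirrors the Property~\ref{property_1} argument but runs it in reverse, and it converts the awkward hypothesis ``does not satisfy the characteristic'' into a single clean inequality to exploit.

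First I would record two structural facts about the construction. (i) Mutual information is bounded by entropy, so $D^z_j = I_{k_j,j} - \max_{k\neq j} I_{k_j,k} \le I_{k_j,j} \le H(z_j)$ for every $j$; summing shows $\text{\OurMeasure}(\bold{c},\bold{z})\le 1$, so the score is genuinely normalized. (ii) Each latent index $i$ lies in exactly one set $\mathbb{I}_{j_i}$, because $j_i=\argmax_j I_{i,j}$ assigns $c_i$ to a single generative factor; hence the sets $\{\mathbb{I}_j\}_j$ are disjoint and the selected representatives $c_{k_1},\dots,c_{k_K}$ are \emph{distinct}. This injectivity is exactly what will later deliver the one-to-one correspondence demanded by Characteristic~\ref{dis_defin_second}.

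Next I would write the gap as $1-\text{\OurMeasure}(\bold{c},\bold{z}) = \big(\sum_j (H(z_j)-D^z_j)\big)\big/\big(\sum_j H(z_j)\big)$. Every summand $H(z_j)-D^z_j$ is nonnegative by fact (i), so a score close to $1$ forces each term to be small, i.e. $D^z_j \approx H(z_j)$ for every $j$. Since $D^z_j = I_{k_j,j}-\max_{k\neq j}I_{k_j,k}$ with $I_{k_j,j}\le H(z_j)$, this single near-equality splits into two per-factor consequences: $I_{k_j,j}\approx H(z_j)$ and $\max_{k\neq j}I_{k_j,k}\approx 0$. Using the stated property that mutual information attains $H(z_j)$ precisely when there is an invertible relationship, the first says $c_{k_j}$ captures $z_j$ fully (\emph{informativeness}); using that $I(c;z)=0$ iff independence, the second says $c_{k_j}$ is essentially invariant to every other generative factor (Characteristic~\ref{dis_defin_first}). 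Taking the subset $c'=\{c_{k_1},\dots,c_{k_K}\}$, fact (ii) makes $z_j\mapsto c_{k_j}$ a bijection onto $c'$, so within $c'$ a change in $z_j$ moves only $c_{k_j}$ (Characteristic~\ref{dis_defin_second}); together with full informativeness this is exactly Characteristic~\ref{dis_defin_third}.

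The main obstacle I expect is the passage from the approximate ``$\approx$'' statements to an honest conclusion about Characteristics~\ref{dis_defin_first}--\ref{dis_defin_third}, which are themselves phrased only up to ``relatively invariant'' and ``high degree of informativeness.'' I would handle this by fixing a tolerance $\varepsilon$, showing that $\text{\OurMeasure}(\bold{c},\bold{z})\ge 1-\varepsilon$ implies $H(z_j)-D^z_j\le \varepsilon\sum_j H(z_j)$ for each $j$, and then reading the two resulting per-factor bounds as quantitative versions of ``captures $z_j$ fully'' and ``invariant to the rest.'' A secondary subtlety worth flagging is that Characteristic~\ref{dis_defin_second} is required only on the selected subset $c'$ (as Characteristic~\ref{dis_defin_third} permits), so a stray latent variable outside $c'$ that depends on several factors does not break the argument: its disentanglement score $D_i$ is small, hence it is never chosen as a representative.
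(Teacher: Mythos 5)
Your argument is internally sound but runs in the opposite direction from the paper's. The paper argues directly: for each generative factor on which Characteristic~\ref{dis_defin_third} fails, either no latent variable captures it (so $D^z_j=0$) or the capturing latent variable also captures other factors (so the gap $D_{k_j}$, and hence $D^z_j$, is small); since this happens for the majority of informative factors, the normalized sum is small. You instead take the contrapositive: a score near $1$ forces $D^z_j\approx H(z_j)$ for every $j$, which splits into $I_{k_j,j}\approx H(z_j)$ (informativeness) and $\max_{k\neq j}I_{k_j,k}\approx 0$ (invariance to other factors). Your observation that the sets $\mathbb{I}_j$ are disjoint---so the representatives $c_{k_1},\dots,c_{k_K}$ are distinct---is a genuinely useful piece of bookkeeping the paper omits, since it is exactly what delivers the one-to-one correspondence demanded by Characteristic~\ref{dis_defin_second}. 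Your route buys a cleaner quantitative statement and reuses the Property~\ref{property_1} machinery; the paper's route is a looser but more direct case analysis.

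The point to be careful about is that the two directions are not equivalent in strength. Property~\ref{property_2} asks that the metric be \emph{low} (the paper's conclusion reads this as ``close to $0$'') on every representation violating the characteristic. The contrapositive of what you prove is only ``violates the characteristic $\Rightarrow$ score is not close to $1$,'' which leaves intermediate scores untouched: a representation in which half the generative factors are perfectly isolated and half are dropped entirely violates Characteristic~\ref{dis_defin_third} yet would receive a score near $0.5$, which your argument does not rule out. To recover the asserted fact you still need the paper's direct step: when the characteristic fails for the \emph{majority of informative} generative factors, the majority of the terms $D^z_j$ in the numerator are individually small (zero if the factor is uncaptured, small if its best latent is entangled), so the whole normalized sum is small. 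Your $\varepsilon$-tolerance framework is compatible with adding that step, but as written the proposal establishes a weaker conclusion than the fact claims.
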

\begin{proof}
When a representation does not satisfy Characteristic~\ref{dis_defin_third} for the majority of informative generative factors $z'$, we cannot find a factor in the latent representation that reflects only this factor.
There are 2 cases for the generative factors from $z'$.
In the first case, there is no latent factor that captures the generative factor $z_j \in z'$.
In that case, $D^z_j$ is equal to 0.
The second case is characterized by the fact that there is a latent factor that captures a generative factor, but this latent factor also captures other generative factors.
In that case the disentangled score of this latent factor $D_{i_j}$ is small, and consequently, $D^z_j$ is small.
\end{proof}
%

\section{Differencese between metrics}
In this section, we explore the differences between metrics in more depth.
We give very simple example of informativeness between generative and latent factors and show that different metrics give different scores and do not correlate with each other.
In the example we assume that the entropy of any generative factor is equal to 1.
Also we do not discuss results of SAP score because it highly dependent on linearity of generative process, see Section \ref{SAP_score_ex}.

\begin{figure}[t]

\centering
\includegraphics[width=0.6\linewidth]{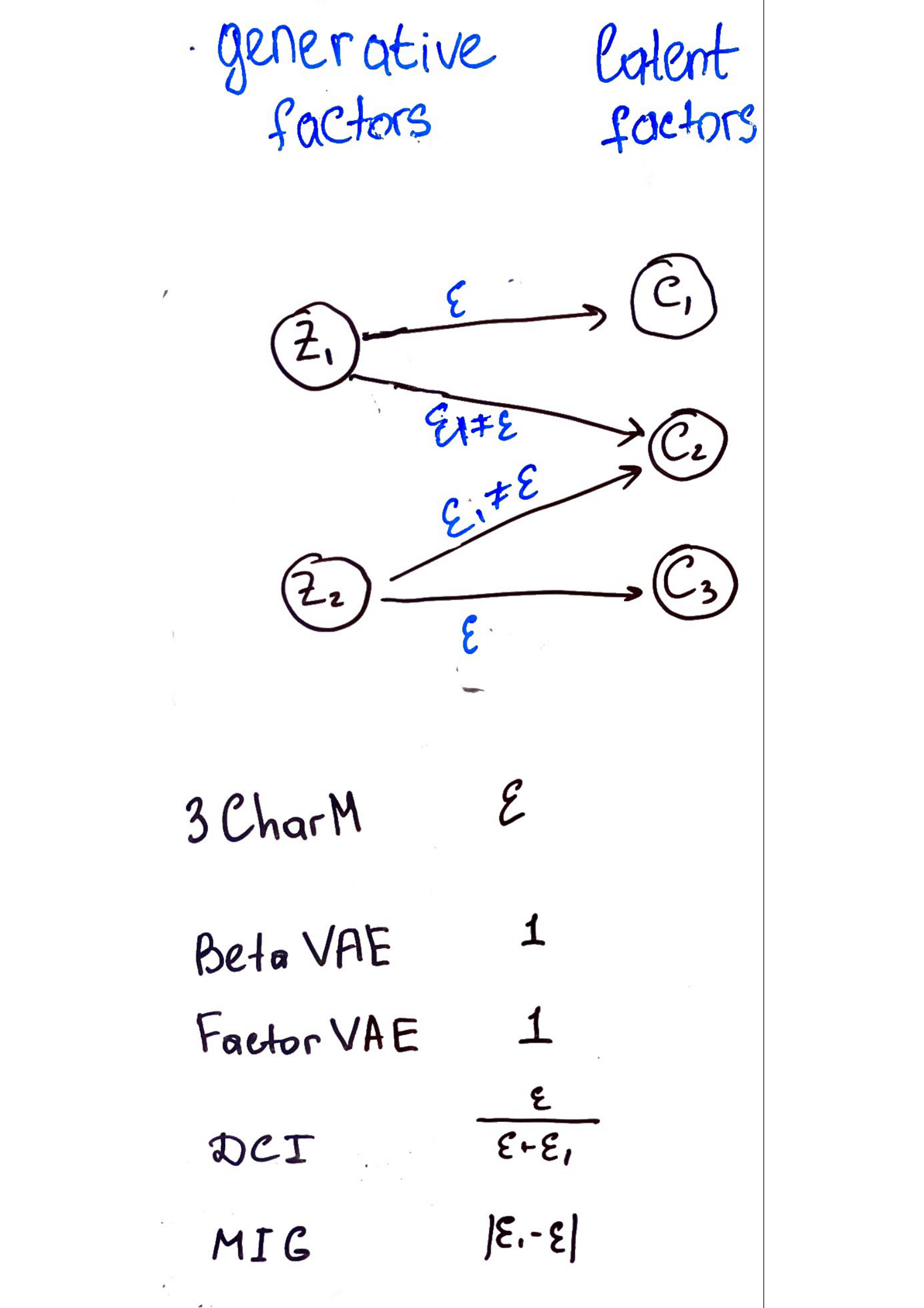}
\caption{Example of informativeness between generative and latent factors and correspondent metrics' scores}
\label{ex1}
\end{figure}

\subsection{Simple example that illustrates the differences between metrics}
In this example we show the representation, that has 2 parameters.
For this representation BetaVAE and FactorVAE \emph{always} give perfect score \emph{independently} of parameters.
The value given by \OurMeasure{} depends only on 1 parameter, while the values of MIG and DCI depend on 2 parameters.
Moreover \OurMeasure{}, MIG and DCI can give completely different results.

Indeed, let us consider the data with 2 generative factors and 3 latent factors, given in the Fig. \ref{ex1}.
The matrix of informativeness is equal to the following 
$$I_{1,1} = I_{2,3} = \epsilon,\ I_{1,2} = I_{2,2} = \epsilon_1, \epsilon\neq\epsilon_1.$$
Then, as simple calculations show 
$$\OurMeasure = \epsilon,\ MIG = |\epsilon - \epsilon_1|,\ DCI = \frac{\epsilon}{\epsilon + \epsilon_1}.$$
This shows that the values of metrics can have different values as shown in the Table~\ref{metrics-values-table}.

\begin{table}[h]
  \caption{Values of metrics for representation shown in Fig. \ref{ex1} depending of values $\epsilon, \epsilon_1$}
  \label{metrics-values-table}
  \centering
  \resizebox{\linewidth}{!}{
  \begin{tabular}{lccccc}
    \toprule
	Values of parameters& \OurMeasure{}& BetaVAE& FactorVAE&  DCI& MIG\\
    \midrule
$\epsilon\sim 0, \epsilon_1\sim ~1$&	$\sim 0$& 1& 1& $\sim 0$& $\sim ~1$\\
$\epsilon\sim 0, \frac{\epsilon_1}{\epsilon}\sim ~0$&	$\sim 0$& 1& 1& $\sim ~1$& $\sim 0$\\
$\epsilon\sim 0, \epsilon_1\sim 0, \frac{\epsilon_1}{\epsilon}\gg 1$&	 $\sim 0$& 1& 1& $\sim 0$& $\sim 0$\\
$\epsilon\sim 1, \epsilon_1\sim 1$&	$\sim ~1$& 1& 1& $\sim 0$& $\sim ~0.5$\\
$\epsilon\sim 1, \epsilon_1\sim 0$&	$\sim ~1$& 1& 1& $\sim ~1$& $\sim ~1$\\

    \bottomrule
  \end{tabular}}
\end{table}

\subsection{Spearman rank correlation between metrics}
Following \citep{locatello2018challenging}, we explore how the metrics agree.
We provide tables with correlation scores between the metrics in Fig.~\ref{rank_correllation}.
We expand the tables given in~\citep{locatello2018challenging}, which show the correlation of Spearman ranks between different metrics, by adding \OurMeasure{}.
We show the results for two datasets: dSprites~\citep{higgins2016beta} and Cars3D~\citep{reed2015deep}, in Fig.~\ref{rank_correllation}.

\begin{figure}[h]
\centering
\begin{subfigure}[t]{0.45\linewidth}
\includegraphics[clip,width=\linewidth]{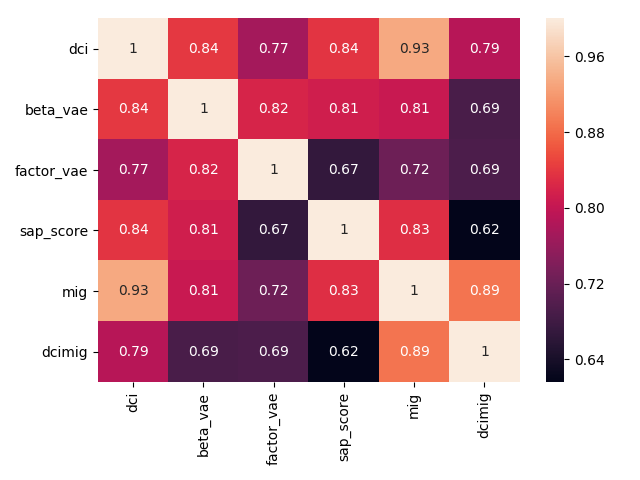}
\caption{Rank correlation of different metrics on the dSprites dataset.}
\end{subfigure}
\hfill
\begin{subfigure}[t]{0.45\linewidth}
\includegraphics[clip, width=\linewidth]{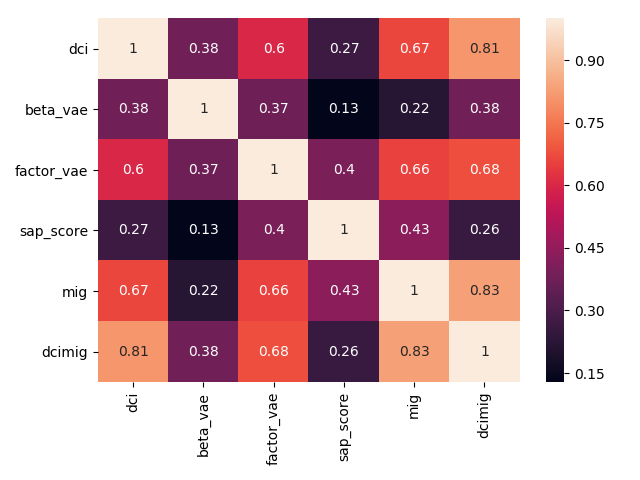}
\caption{Rank correlation of different metrics on the Cars3D dataset.}
\end{subfigure}%
\caption{Rank correlation of different metrics on two datasets. Overall, all metrics are strongly correlated.}
\label{rank_correllation}
\end{figure}

\textbf{Results.}
These datasets are artificial datasets, on which we observe that all metrics are strongly correlated.

\section{Related Work}
This paper is relevant to two research directions: the formulation of a notion of disentangled representation and the analysis of differences between proposed metrics of disentangled representations.

A definition of disentangled representation is presented by~\citet{higgins2018towards}, who propose to call a representation disentangled if it is consistent with transformations that characterized the dataset.
In particular, \citet{higgins2018towards} suggested that transformations that change only some properties of elements in the dataset, while leaving other properties unchanged, give the structure of a dataset.
Desirable properties of a disentanglement metric are formulated by~\citet{eastwood2018framework}; they are \emph{disentanglement}, \emph{completeness}, and \emph{informativeness}.
\citet{eastwood2018framework} claim that a good representation should satisfy all of these properties, namely (1)~if a representation is  good, then change in one latent factor should lead to change in one generative factor, (2)~a change in one generative factor should lead to a change in one latent factor, and (3)~a latent representation should contain all information about the generative factors.
Therefore, \citet{eastwood2018framework} propose three metrics to satisfy each of the properties listed. However, the proposed metrics were not analyzed --- a gap that we fill.

Several papers analyze the differences between metrics of disentanglement through experimental studies~\citep{locatello2018challenging, chen2018isolating}.
For example, \citet{locatello2018challenging} train 12,000 models that cover the most prominent methods and evaluate these models using existing metrics of disentanglement.
The study shows that the metrics are correlated, but the degree of correlation depends on the dataset.
It is important to note that their experimental results are consistent with our theoretical findings: the BetaVAE~\citep{higgins2016beta} and FactorVAE~\cite{kim2018disentangling} metrics are strongly correlated with each other; and the SAP~\citep{kumar2017variational}, MIG~\cite{chen2018isolating}, DCI~\citep{eastwood2018framework} scores are also strongly correlated.
\citet{locatello2018challenging} take an important step towards the evaluation of methods to create disentangled representations, however, the properties of the metrics are not analyzed theoretically.
\citet{chen2018isolating} take a step in this direction, but only analyze the BetaVAE, FactorVAE and MIG metrics. 
\citet{chen2018isolating} compare metrics by analyzing robustness to the choice of the hyperparameters during experiments.
The experimental findings are quite similar to ours: BetaVAE is a very optimistic metric and assigns high scores to entangled representations.

To summarize, the key distinctions of our work compared to previous efforts are: (1)~a broad coverage, in-depth analysis of previously proposed metrics of disentanglement, and (2) a proposal of a single metric of disentanglement that reflects all properties of previously proposed ones and has theoretical guarantees.

\section{Conclusion}
In recent years, several models have been developed to obtain disentangled representations~\citep{yu2017semantic, hu2017disentangling, denton2017unsupervised, kim2018disentangling}.
Currently, there are five metrics that are commonly used to evaluate the models: \emph{BetaVAE}~\citep{higgins2016beta}, FactorVAE~\citep{kim2018disentangling}, DCI~\citep{eastwood2018framework}, SAP~\citep{kumar2017variational} and MIG~\citep{chen2018isolating}.
Interestingly, all of these metrics are based upon the definition of disentangled representation proposed in~\citep{bengio2013representation}.
However, three of the metrics were designed to reflect Characteristic~\ref{dis_defin_first} of disentangled representations, while two were designed to reflect Characteristic~\ref{dis_defin_second}.
The primary goal of this paper has been to provide an analysis of the existing metrics of disentangled representations.
We theoretically analyze how well the proposed metrics reflect the characteristics of disentangled representations that they are intended to reflect.
In particular, we analyzed each of the existing metrics of disentanglement by two properties: whether a metric is close to $1$ when a representation satisfies the characteristic that the  metric reflects and whether the metric is close to $0$ when a representation does not satisfy the characteristic.
Surprisingly, we found that most of the existing metrics does not satisfy these basic properties.

The importance of developing a reliable metric of disentanglement has been clearly stated by~\citet{kim2018disentangling, abdi2019preliminary}.
A key contribution of this paper is a new metric of disentangled representation, called \OurMeasure{}.
First, we formalize the desired characteristics, which, in our opinion, should reflect the metrics, and then prove that \OurMeasure{} reflects them properly.

\bibliographystyle{icml2020}
\newpage
\vskip 0.2in
\bibliography{icml2020}


\end{document}